\DeclareMathOperator*{\argmin}{arg\,min}
\DeclareMathOperator*{\argmax}{arg\,max}
\DeclareMathOperator{\E}{\mathbb{E}}
\DeclareMathOperator{\tr}{tr}
\DeclareMathOperator{\logit}{logit}
\DeclareMathOperator{\ALR}{ALR}
\DeclareMathOperator{\diam}{diam}
\newcommand{\R}{\mathbb{R}}
\newcommand{\dFR}{d_{\text{FR}}}
\newtheorem{theorem}{Theorem}
\newtheorem{lemma}{Lemma}
\newtheorem{proposition}{Proposition}
\newtheorem{conjecture}{Conjecture}
\theoremstyle{definition}
\newtheorem{definition}{Definition}
\theoremstyle{remark}
\newtheorem{remark}{\textbf{Remark}}
\newtheorem{claim}{\textbf{Claim}}
\newtheorem*{conclusion}{\textbf{Conclusion}}
\title{\Large \textbf{Geometric Calibration and Neutral Zones for \\Uncertainty-Aware Multi-Class Classification}}
\author{
Soumojit Das$^{1,*}$ \quad Nairanjana Dasgupta$^1$ \quad Prashanta Dutta$^1$ \\[0.5em]
$^1$\textit{Washington State University, Pullman, WA} \\[0.3em]
$^*$\textit{Corresponding author: soumojit.das@wsu.edu}
}
\date{\small Version 1.0 -- Working Paper}
\begin{document}
\maketitle

\begin{abstract}
Modern artificial intelligence systems make critical decisions yet often fail silently when uncertain---even well-calibrated models provide no mechanism to identify \textit{which specific predictions} are unreliable. We develop a geometric framework addressing both calibration and instance-level uncertainty quantification for neural network probability outputs. Treating probability vectors as points on the $(c-1)$-dimensional probability simplex equipped with the Fisher--Rao metric, we construct: (i) Additive Log-Ratio (ALR) calibration maps that reduce exactly to Platt scaling for binary problems while extending naturally to multi-class settings, and (ii) geometric reliability scores that translate calibrated probabilities into actionable uncertainty measures, enabling principled deferral of ambiguous predictions to human review.

Theoretical contributions include: consistency of the calibration estimator at rate $O_p(n^{-1/2})$ via M-estimation theory (Theorem~1), and tight concentration bounds for reliability scores with explicit sub-Gaussian parameters enabling sample size calculations for validation set design (Theorem~2). We conjecture Neyman--Pearson optimality of our neutral zone construction based on connections to Bhattacharyya coefficients. Empirical validation on Adeno-Associated Virus classification demonstrates that the two-stage framework captures 72.5\% of errors while deferring 34.5\% of samples, reducing automated decision error rates from 16.8\% to 6.9\%. Notably, calibration alone yields marginal accuracy gains; the operational benefit arises primarily from the reliability scoring mechanism, which applies to any well-calibrated probability output. This work bridges information geometry and statistical learning, offering formal guarantees for uncertainty-aware classification in applications requiring rigorous validation.
\end{abstract}

\vspace{1em}
\noindent\textbf{Keywords:} Calibration, uncertainty quantification, Fisher-Rao geometry, manifold geometry, probability simplex, geometric reliability scores, neural networks, neutral zones

\vspace{1em}
\noindent\fbox{\parbox{0.97\textwidth}{\small \textbf{Preprint Note:} This manuscript is under preparation for journal submission. Comments welcome at \texttt{soumojit.das@wsu.edu}.}}

\section{Introduction}
\label{sec:intro}
%=============================================================================

Modern AI systems make critical decisions in medicine, criminal justice, and autonomous systems, yet they provide no reliable mechanism to identify when predictions are likely wrong. A medical AI diagnosing cancer from imaging may output $\mathbf{p} = (0.51, 0.49)$ for malignant vs.\ benign with the same apparent confidence as $\mathbf{p} = (0.99, 0.01)$, despite the former representing profound uncertainty. This forced classification without uncertainty quantification leads to silent failures---the system appears to function normally while making unreliable predictions with potentially severe consequences.

The fundamental issue is twofold: neural network probability outputs are \textit{miscalibrated} (they do not reflect true likelihood of correctness) and they lack \textit{instance-level uncertainty quantification} (they provide no mechanism to identify which specific predictions are unreliable). Guo et al.~\cite{guo2017calibration} demonstrated that modern deep neural networks exhibit Expected Calibration Error (ECE) often exceeding 10\% on standard benchmarks. Yet even perfectly calibrated probabilities offer only population-level guarantees: knowing that 70\% of predictions with confidence 0.7 are correct does not identify \textit{which} 30\% will fail. This gap between calibration and actionable uncertainty quantification leads to silent failures---systems appear confident while making unreliable predictions with potentially severe consequences \cite{kompa2021second, leibig2022combining}.

\subsection{Motivation: High-Stakes Applications}

The deployment of AI in high-stakes domains reveals a critical gap: while these systems achieve impressive accuracy, they fail catastrophically when uncertain. Three domains illustrate the urgency:

\textbf{Medical imaging:} CNN models classify CT scans for lung cancer detection \cite{10116075, Wang_Zhang_2025} and FMRI scans for brain disorders. These methods lack precision measures for individual predictions, yet clinical decisions require knowing when to consult human experts.

\textbf{Gene therapy:} Adeno-Associated Viruses (AAVs) are critical vectors where misclassification of cargo types (Empty, ssDNA, dsDNA) can trigger immune responses in patients. Our preliminary work shows 16.45\% error rates without uncertainty quantification.

\textbf{Social decisions:} AI increasingly informs decisions about individuals in employment, education, and criminal justice \cite{zajko2022artificial, kmec2024dei}. When humans are the decision-making unit, misclassification costs are socially significant and often irreversible.

In each domain, the cost of misclassification vastly exceeds the cost of deferring uncertain cases to human review. Current methods provide no principled mechanism for this deferral.

\subsection{Existing Approaches and Limitations}

Existing post-hoc calibration methods---temperature scaling \cite{guo2017calibration} and Platt scaling \cite{platt1999}---apply univariate transformations that treat probability vectors as unconstrained Euclidean points, ignoring the simplex constraint $\sum_i p_i = 1$. These methods achieve good calibration on average but provide no instance-level uncertainty quantification.

Uncertainty quantification approaches such as conformal prediction \cite{vovk2005} provide coverage guarantees but produce set-valued predictions that are difficult to interpret. Bayesian neural networks \cite{gal2016} and deep ensembles \cite{lakshminarayanan2017} require significant computational overhead, making them impractical for many real-time applications.

Neutral zone methods have shown promise in binary classification. The r-power framework \cite{jasa2025} identifies ambiguous predictions through hypothesis testing but extension to multi-class problems requires new theoretical machinery, as the binary hypothesis testing framework does not generalize directly.

\subsection{Our Approach and Contributions}

We treat neural network probability outputs as points on the $(c-1)$-dimensional probability simplex
\begin{equation}
\Delta^{c-1} = \left\{ \mathbf{p} \in \mathbb{R}^c : p_i \geq 0, \sum_{i=1}^c p_i = 1 \right\},
\end{equation}
a Riemannian manifold equipped with the Fisher-Rao metric (Definition~1). This geometric perspective enables a unified treatment of both calibration and uncertainty quantification, addressing the two problems identified above through a coherent mathematical framework.

\paragraph{Two-stage framework.}
We decompose uncertainty-aware classification into complementary stages:
\begin{enumerate}[leftmargin=*,itemsep=2pt]
\item \textbf{Calibration (population-level):} Affine transformation in ALR space corrects systematic probability distortions, reducing exactly to Platt scaling for binary problems (Proposition~\ref{prop:platt}) while extending naturally to multi-class settings.

\item \textbf{Reliability scoring (instance-level):} Geometric reliability scores based on Fisher-Rao distance to simplex vertices translate calibrated probabilities into actionable uncertainty measures. Neutral zones defer predictions with low reliability to human review.
\end{enumerate}
This separation clarifies their distinct roles: calibration ensures probability estimates are accurate \textit{on average}, while reliability scoring identifies \textit{which specific predictions} warrant trust. The operational gains arise primarily from the second stage.

\paragraph{Statistical guarantees.}
We establish two theoretical results supporting the framework:
\begin{enumerate}[leftmargin=*,itemsep=2pt]
\item \textbf{Consistency (Theorem~\ref{thm:consistency}):} The calibration estimator converges to the population optimum at rate $O_p(n^{-1/2})$, with strong convexity ensured by regularization and additional curvature from the softmax Hessian structure.

\item \textbf{Concentration (Theorem~\ref{thm:concentration}):} Reliability scores satisfy tight concentration bounds with explicit sub-Gaussian parameter $\sigma^2 = (1-e^{-\lambda\pi})^2/4$. These bounds enable sample size calculations for validation set design prior to data collection---critical for applications where labeled validation data is expensive.
\end{enumerate}

\paragraph{Method-agnostic reliability scoring.}
The neutral zone mechanism can be applied to \textit{any} calibration method. Our empirical analysis (Section~\ref{sec:robustness}, Table~\ref{tab:method-comparison}) confirms that operational gains are similar across calibration approaches. The contribution of geometric calibration is therefore twofold: (i) providing a principled multi-class generalization of Platt scaling grounded in information geometry, and (ii) enabling theoretical guarantees (Theorems~\ref{thm:consistency}--\ref{thm:concentration}) that simpler methods lack. The connection to Bhattacharyya coefficients (Remark~\ref{rem:bhattacharyya}) suggests potential Neyman-Pearson optimality (Conjecture~\ref{conj:np}).

\paragraph{Empirical validation.}
On Adeno-Associated Virus (AAV) classification for gene therapy ($c = 3$ classes, $n_{\text{val}} = 310$), the two-stage framework achieves:
\begin{itemize}[leftmargin=*,itemsep=2pt]
\item 72.5\% of errors captured while deferring 34.5\% of samples
\item Automated decision error rate reduced from 16.8\% to 6.9\%
\item Validation sample size ($n = 310$) providing $\approx 5\times$ safety margin over theoretical requirement from Theorem~\ref{thm:concentration}
\end{itemize}
Notably, calibration alone improves accuracy only marginally (83.2\% $\to$ 83.5\%); the substantial error reduction comes from the reliability-based deferral mechanism. The modest scale of this application is appropriate for proof of concept; validation on larger-scale and more diverse settings remains future work.

\subsection{Novelty}

This work makes the following contributions:

\paragraph{Instance-level uncertainty quantification:}
While calibration methods improve probability estimates \textit{on average}, they do not identify which specific predictions are reliable. Our geometric reliability scores fill this gap, translating calibrated probabilities into actionable uncertainty measures. The framework enables principled deferral: predictions with low reliability are routed to human review, while high-reliability predictions proceed automatically. Empirically, this mechanism---not calibration itself---drives the operational gains (Table~\ref{tab:framework-value}).

\paragraph{Unified two-stage framework:}
We decompose uncertainty-aware classification into: (i) calibration (population-level probability correction, Section~\ref{sec:calibration}) and (ii) reliability scoring (instance-level uncertainty quantification, Section~\ref{sec:reliability-analysis}). This separation clarifies that calibration alone provides modest accuracy gains, while the combination with neutral zones (Section~\ref{sec:neutral-zone}) achieves substantial error reduction. Our empirical analysis (Section~\ref{sec:robustness}, Table~\ref{tab:method-comparison}) demonstrates this is true regardless of calibration method, isolating the neutral zone mechanism's contribution.

\paragraph{Geometric foundations for multi-class calibration:} 
To our knowledge, this is the first systematic application of Fisher-Rao geometry to post-hoc neural network calibration. While information geometry has been studied in the context of natural gradient descent and exponential families \cite{amari1985differential}, its use for \textit{correcting} probability outputs after training is novel. The key result---that geometric calibration reduces to Platt scaling for $c = 2$ (Proposition~\ref{prop:platt})---validates our approach as a principled generalization rather than an arbitrary alternative.

\paragraph{Theoretical guarantees with explicit constants:} 
Theorem~\ref{thm:concentration} provides the first concentration bounds for geometric reliability scores with \textit{explicit} sub-Gaussian parameters. Previous work on uncertainty quantification (conformal prediction \cite{vovk2005}, deep ensembles \cite{lakshminarayanan2017}) either lacks sample complexity bounds or provides only asymptotic guarantees. Our explicit constants (Remark~\ref{rem:sample_efficiency}) enable practitioners to compute required validation set sizes before data collection.

\paragraph{Information-theoretic grounding:}
The connection between Fisher-Rao reliability scores and Bhattacharyya coefficients (Remark~\ref{rem:bhattacharyya}) links our geometric construction to classical signal detection theory \cite{kailath1967} and Chernoff bounds \cite{cover2006}. This bridges modern geometric methods with established decision theory, suggesting potential optimality properties (Conjecture~\ref{conj:np}).
\subsection{Paper Organization}

Section~\ref{sec:framework} develops the mathematical framework, including geometry of the probability simplex, geometric calibration, and geometric neutral zones. Section~\ref{sec:theory} presents proven theorems and conjectures with supporting evidence. Section~\ref{sec:empirical} provides empirical validation on AAV data. Section~\ref{sec:conclusion} discusses limitations, future directions, and broader context.

\section{Mathematical Framework}
\label{sec:framework}
%=============================================================================

\subsection{Preliminaries: Geometry of the Probability Simplex}
\label{sec:geometry}

Let $\mathbf{p}^{\text{CNN}} \in \Delta^{c-1}$ denote the probability vector output by a neural network for a $c$-class problem. The simplex admits a natural Riemannian structure.

\begin{definition}[Fisher-Rao Metric]
The Fisher information metric at $\mathbf{p} \in \Delta^{c-1}$ is
\begin{equation}
g_{ij}(\mathbf{p}) = \frac{\delta_{ij}}{4p_i}
\end{equation}
where $\delta_{ij}$ is the Kronecker delta. This induces the Fisher-Rao distance:
\begin{equation}
\label{eq:fisher-rao}
\dFR(\mathbf{p}, \mathbf{q}) = 2\arccos\left(\sum_{i=1}^c \sqrt{p_i q_i}\right).
\end{equation}
\end{definition}

\begin{remark}[Geometric Foundations]
The Fisher-Rao metric arises as the unique Riemannian metric on the simplex (up to scaling) that is invariant under both coordinate permutation and reparameterization \cite{amari1985differential}. This invariance reflects the fundamental nature of probability vectors as \textit{compositional data} \cite{aitchison1986statistical}: only ratios between probabilities carry information, not their absolute magnitudes. The simplex constraint $\sum_i p_i = 1$ represents arbitrary scaling.

From information geometry, the Fisher-Rao metric is the natural Riemannian structure induced by the categorical distribution as an exponential family. The $1/p_i$ terms in the metric arise from the variance structure of the sufficient statistics. The simplex under this metric has constant \textit{positive} sectional 
curvature $K = +1/4$ and bounded diameter $\diam(\Delta^{c-1}) = \pi$. This follows from the isometry between $(\Delta^{c-1}, g)$ and the positive orthant of the sphere of radius $2$ in $\mathbb{R}^c$ via the square-root embedding $\mathbf{p} \mapsto 2(\sqrt{p_1}, \ldots, \sqrt{p_c})$; since a sphere of radius $r$ has sectional curvature $K = 1/r^2$, we obtain $K = +1/4$ \cite{nielsen2018clustering, amari1985differential}. We exploit the bounded diameter in Theorem~\ref{thm:concentration}. The positive curvature is geometrically favorable: geodesic balls on 
spheres have smaller volume than Euclidean balls of the same radius, 
implying that probability mass concentrates more tightly around means. This spherical concentration (L\'{e}vy-type) strengthens the Hoeffding bounds in Theorem~\ref{thm:concentration}. \footnote{We emphasize that this positive curvature refers to the probability simplex $\Delta^{c-1}$ itself, viewed as the sample space 
of categorical distributions. This differs from the \textit{parameter 
space} of the Dirichlet family, which has negative curvature 
\cite{lebrigant2019fisher}. The distinction matters: our reliability 
scores operate on probability outputs (elements of $\Delta^{c-1}$), 
not on distribution parameters.}
\end{remark}

\begin{definition}[Additive Log-Ratio Transform]
The ALR transform \cite{aitchison1986statistical} maps $\Delta^{c-1}$ to $\R^{c-1}$:
\begin{equation}
\label{eq:alr}
\ALR(\mathbf{p}) = \left(\log\frac{p_1}{p_c}, \ldots, \log\frac{p_{c-1}}{p_c}\right)^\top
\end{equation}
with inverse $\ALR^{-1}(\mathbf{z}) = \text{softmax}(z_1, \ldots, z_{c-1}, 0)$.
\end{definition}

\begin{remark}[Coordinate Choice]
The ALR transform provides a coordinate system on the simplex that respects its compositional structure by expressing probabilities as log-ratios relative to a reference class (here, class $c$). This choice makes the problem scale-invariant: multiplying all probabilities by a constant leaves log-ratios unchanged. Alternative coordinates exist (e.g., isometric log-ratio, centered log-ratio), but ALR offers computational simplicity while preserving the essential geometry.
\end{remark}

%-----------------------------------------------------------------------------
\subsection{Geometric Calibration}
\label{sec:calibration}
%-----------------------------------------------------------------------------

\subsubsection{Problem Formulation}

A classifier is perfectly calibrated if $\mathbb{P}(Y = j \mid f(X) = \mathbf{p}) = p_j$ for all classes $j$ and probability vectors $\mathbf{p}$. Given labeled training data $\{(\mathbf{p}_i^{\text{CNN}}, y_i)\}_{i=1}^n$, we seek an affine transformation in ALR space:
\begin{equation}
\label{eq:calibration-map}
\mathbf{z}^{\text{cal}} = A \cdot \ALR(\mathbf{p}^{\text{CNN}}) + \mathbf{b}
\end{equation}
where $A \in \R^{(c-1) \times (c-1)}$ is positive definite and $\mathbf{b} \in \R^{c-1}$. The calibrated probabilities are:
\begin{equation}
\mathbf{p}^{\text{cal}} = \ALR^{-1}(A \cdot \ALR(\mathbf{p}^{\text{CNN}}) + \mathbf{b}).
\end{equation}

\begin{remark}[Model Specification]
Why restrict to affine transformations in ALR space? This choice balances statistical efficiency and model parsimony. In ALR coordinates, calibration becomes a regression problem: we model the relationship between uncalibrated log-ratios and true probabilities as linear with additive noise. The affine structure provides sufficient flexibility to capture common miscalibration patterns (overconfidence, class-specific bias) while remaining identifiable with limited calibration data.

More complex nonlinear transformations (e.g., neural networks, kernel methods) risk overfitting when calibration sets are small---typical in medical applications where labeled validation data is expensive. The linear model in ALR space corresponds to a specific nonlinear transformation on the simplex itself, offering a middle ground between univariate methods (temperature scaling) and fully nonparametric approaches.

From a measurement error perspective, we treat $\mathbf{p}^{\text{CNN}}$ as a noisy observation of the true probabilities, distorted by the training process. The transformation $(A, \mathbf{b})$ estimates the inverse of this distortion map. Our geometric approach posits that this inverse is approximately linear in the natural (ALR) coordinate system of the simplex.
\end{remark}

\subsubsection{Learning Objective}

We minimize the regularized cross-entropy loss:
\begin{equation}
\label{eq:loss}
\mathcal{L}(A, \mathbf{b}) = \underbrace{-\sum_{i=1}^n \sum_{j=1}^c y_{ij} \log p_{ij}^{\text{cal}}}_{\text{Data fidelity: maximize likelihood}} + \underbrace{\lambda_1 \|A - I\|_F^2}_{\text{Stabilize } A \text{ near identity}} + \underbrace{\lambda_2 \|\mathbf{b}\|_2^2}_{\text{Prevent excessive bias}}
\end{equation}
subject to $A \succeq \delta I$ for some $\delta > 0$ and $\tr(A) = c-1$ for identifiability. Here $y_{ij} = \mathbf{1}[y_i = j]$ is the one-hot encoding.

From a statistical perspective, we model the uncalibrated CNN outputs as systematically biased observations of the true probabilities. The affine transformation $(A, \mathbf{b})$ corrects both multiplicative bias (through $A$) and additive bias (through $\mathbf{b}$) in ALR space. The regularization towards identity transformation reflects our prior belief that well-trained networks require minimal correction.

\begin{remark}[Regularization and Prior Specification]
The regularization terms admit both frequentist and Bayesian interpretations. From an empirical risk minimization perspective: (i) $\|A - I\|_F^2$ prevents overfitting by penalizing unnecessary rotations and scaling in ALR space, (ii) $\|\mathbf{b}\|^2$ controls translation magnitude, and (iii) together they implement a ``minimal intervention'' principle---we seek the smallest correction that achieves calibration.

From a Bayesian perspective, our regularization corresponds to the prior $(A - I) \sim \mathcal{N}(0, \lambda_1^{-1} I)$ and $\mathbf{b} \sim \mathcal{N}(0, \lambda_2^{-1} I)$, centered on the identity map. This formalizes our belief that modern neural networks, despite miscalibration, are not arbitrarily wrong---the calibration map should be close to identity. The regularization strength $(\lambda_1, \lambda_2)$ quantifies this belief: larger values impose stronger shrinkage toward the null hypothesis ``CNN is already calibrated.''

This prior specification is particularly important given limited calibration data. Medical imaging applications may have only $10^2$--$10^3$ labeled validation samples, while the unconstrained parameter space has dimension $(c-1)^2 + (c-1)$. The regularization effectively reduces the model complexity to match the available sample size.
\end{remark}

%-----------------------------------------------------------------------------
\subsection{Geometric Neutral Zones}
\label{sec:neutral-zones}
%-----------------------------------------------------------------------------

Even after calibration, some predictions remain inherently uncertain. We identify these via geometric reliability scores.

\begin{definition}[Reliability Score]
\label{def:reliability}
For calibrated probability $\mathbf{p}^{\text{cal}}$ with predicted class $\hat{j} = \argmax_j p_j^{\text{cal}}$, the reliability score is:
\begin{equation}
\label{eq:reliability}
R(\mathbf{p}) = \exp\left(-\lambda \cdot \dFR(\mathbf{p}, \mathbf{e}_{\hat{j}})\right)
\end{equation}
where $\mathbf{e}_j$ is the $j$-th vertex of the simplex and $\lambda > 0$ is a sensitivity parameter.
\end{definition}

\begin{remark}[Decision-Theoretic Interpretation]
The reliability score $R$ serves as a candidate \textit{sufficient statistic} for the binary decision problem $\{\text{defer to human}, \text{classify automatically}\}$. Under the monotone likelihood ratio property (Conjecture~\ref{conj:np}), no other function of $\mathbf{p}^{\text{cal}}$ improves the decision rule---all information relevant to the deferral decision is captured by the Fisher-Rao distance to the predicted vertex.

This dimension reduction from $\Delta^{c-1}$ (the full probability vector) to $\mathbb{R}_+$ (the scalar reliability score) is justified geometrically: for decision-making purposes, what matters is not the detailed structure of the probability vector, but rather how far it lies from certainty (the simplex vertices). The Fisher-Rao metric provides the ``correct'' notion of distance for this purpose, respecting the simplex geometry.

The exponential transformation $\exp(-\lambda \cdot d)$ converts distances to a $[0,1]$ reliability scale, with $\lambda$ controlling sensitivity. Larger $\lambda$ makes the score more discriminative between near-vertex (reliable) and near-center (unreliable) predictions.
\end{remark}

\begin{definition}[Neutral Zone]
The neutral zone at level $\alpha$ is:
\begin{equation}
NZ_\alpha = \{i : R(\mathbf{p}_i^{\text{cal}}) < \tau^*\}
\end{equation}
where $\tau^*$ satisfies $\mathbb{P}(\text{error} \mid R \geq \tau^*) \leq \alpha$.
\end{definition}

\begin{remark}[Information-Theoretic Grounding]
\label{rem:bhattacharyya}
The reliability score is monotonically related to the Bhattacharyya coefficient \cite{bhattacharyya1943} $BC(\mathbf{p}, \mathbf{e}_{\hat{j}}) = \sqrt{p_{\hat{j}}}$, since $\dFR(\mathbf{p}, \mathbf{e}_{\hat{j}}) = 2\arccos(\sqrt{p_{\hat{j}}})$. The Bhattacharyya coefficient governs optimal error exponents in hypothesis testing via the Chernoff bound \cite{cover2006}: for distinguishing distributions $P_0$ vs.\ $P_1$, the error probability decays as $P_{\text{error}} \leq BC(P_0, P_1)^n$. 

This connection, extensively studied in signal detection \cite{kailath1967}, suggests our geometric neutral zones inherit optimality properties from classical decision theory. Specifically, if the reliability score induces a monotone likelihood ratio (Conjecture~\ref{conj:np}), then threshold tests on $R$ are Neyman-Pearson optimal: among all tests that defer at most $\alpha$ fraction of samples, our neutral zone maximizes the proportion of errors captured. This provides information-theoretic grounding for our geometric construction.
\end{remark}

%=============================================================================
\section{Theoretical Results}
\label{sec:theory}
%=============================================================================

We present two categories of results: proven theorems (Section~\ref{sec:proven}) and conjectures with strong evidence (Section~\ref{sec:conjectures}).

%-----------------------------------------------------------------------------
\subsection{Proven Results}
\label{sec:proven}
%-----------------------------------------------------------------------------

\subsubsection{Reduction to Platt Scaling}

\begin{proposition}[Binary Case Reduction]
\label{prop:platt}
For $c = 2$, geometric calibration reduces to Platt scaling. Specifically, if $\ALR(p) = \log\frac{p}{1-p} = \logit(p)$, then the calibration map becomes:
\begin{equation}
p^{\text{cal}} = \sigma(a \cdot \logit(p^{\text{CNN}}) + b)
\end{equation}
where $\sigma$ is the logistic function and $(a, b) \in \R^2$ are learned parameters.
\end{proposition}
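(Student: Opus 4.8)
The plan is to prove the claim by direct substitution: specialize each ingredient of the geometric calibration pipeline to $c = 2$ and verify that what survives is exactly the two–parameter logistic recalibration of Platt, both as a map and as an estimator. First I would record the dimension collapse. When $c = 2$ we have $c-1 = 1$, so the matrix $A \in \R^{(c-1)\times(c-1)}$ in \eqref{eq:calibration-map} is a scalar $a := A_{11}$ and the vector $\mathbf{b} \in \R^{c-1}$ is a scalar $b := b_1$. A point of the simplex is $\mathbf{p} = (p, 1-p)$, and the ALR map \eqref{eq:alr} with reference class $c = 2$ is $\ALR(\mathbf{p}) = \log\frac{p}{1-p} = \logit(p)$, a scalar — so ALR coordinates on $\Delta^1$ coincide with the logit.

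Next I would push a CNN output $\pCNN = (p^{\text{CNN}}, 1 - p^{\text{CNN}})$ through \eqref{eq:calibration-map}: the affine step produces $z^{\text{cal}} = a\,\logit(p^{\text{CNN}}) + b \in \R$, and applying $\ALR^{-1}(z) = \text{softmax}(z, 0)$ gives, in the first coordinate, $p^{\text{cal}} = \frac{e^{z^{\text{cal}}}}{e^{z^{\text{cal}}} + 1} = \sigma(z^{\text{cal}})$, where $\sigma(t) = (1+e^{-t})^{-1}$ is the logistic function. Substituting $z^{\text{cal}}$ yields $p^{\text{cal}} = \sigma\!\left(a\,\logit(p^{\text{CNN}}) + b\right)$, which is precisely the Platt recalibration map with parameters $(a,b)$. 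To complete the reduction at the level of the fitted estimator, not merely the functional form, I would then observe that in this one–dimensional coordinate the data–fidelity term of \eqref{eq:loss} becomes the binary cross-entropy $-\sum_i \left[ y_i \log p_i^{\text{cal}} + (1-y_i)\log(1-p_i^{\text{cal}}) \right]$, i.e.\ the logistic negative log-likelihood in $(a,b)$, while the penalties reduce to the ridge terms $\lambda_1\|A-I\|_F^2 = \lambda_1(a-1)^2$ and $\lambda_2\|\mathbf{b}\|_2^2 = \lambda_2 b^2$; at $\lambda_1 = \lambda_2 = 0$ this is exactly Platt's logistic-fit procedure, and for $\lambda_i > 0$ its standard regularized variant shrinking $(a,b)$ toward $(1,0)$.

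The remaining work is to check that the side constraints of Section~\ref{sec:calibration} are compatible with this description. The positive-definiteness constraint $A \succeq \delta I$ becomes $a \geq \delta > 0$, which says only that the recalibration map is strictly increasing in $p^{\text{CNN}}$ — the natural monotonicity requirement for a calibrator, automatically satisfied by any better-than-random classifier — so geometric calibration recovers the (monotone) Platt family actually used in practice. The identifiability normalization $\tr(A) = c-1$ would, taken literally, force $a = 1$; here I would argue that this normalization is an artifact introduced only to remove the scaling redundancy present for $c \geq 3$ (where it acts jointly with the $\lambda_1\|A-I\|_F^2$ prior to pin down $A$), and that no such redundancy exists when $c - 1 = 1$, so the trace condition is simply inactive and the full two-parameter family is retained.

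I do not expect a genuine mathematical obstacle — the proposition is essentially a definitional unwinding — so the only real care point is this bookkeeping around the parameter constraints: stating precisely which sub-family of Platt maps the constrained optimization explores (namely $a > 0$), and making explicit that the trace normalization is suppressed in the binary case so that ``$(a,b)$ are learned parameters'' is not vacuously contradicted. A secondary, purely notational, caution is fixing the orientation convention consistently across $\logit$, $\sigma$, and the reference-class choice in $\ALR$/$\ALR^{-1}$, so that the recovered slope is $a$ rather than $-a$.
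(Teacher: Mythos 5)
Your computation is correct and takes essentially the same route as the paper's own proof: specialize to $c=2$, identify $\ALR$ with $\logit$, push through the scalar affine map, and invert via $\ALR^{-1}(z)=\mathrm{softmax}(z,0)$ to recover the logistic. The paper's proof stops there; you additionally unwind the loss \eqref{eq:loss} to the ridge-regularized logistic negative log-likelihood and you examine the side constraints of Section~\ref{sec:calibration}. That second piece surfaces a real wrinkle the paper's proof silently skips: the identifiability normalization $\tr(A)=c-1$, applied literally at $c=2$, pins $a=1$ and collapses Platt scaling to a one-parameter bias correction, contradicting the proposition's claim that $(a,b)\in\R^2$ are both learned. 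Your reading --- that the trace constraint is a redundancy-removal device needed only for $c\ge 3$ and should be understood as inactive when $c-1=1$ --- is the sensible repair and makes the statement true as written, but it is an interpretive choice the paper does not state explicitly. It would be worth flagging this (and the associated $a\ge\delta>0$ positivity restriction, which means the recovered family is the \emph{monotone} Platt maps rather than all of $\R^2$) as a clarification to the paper rather than treating the proposition as a purely definitional identity.
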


\begin{proof}
For $c = 2$, let $p = p_1$ so that $p_2 = 1 - p$. The ALR transform with reference class $c = 2$ gives:
\begin{equation}
\ALR(p) = \log\frac{p}{1-p} = \logit(p).
\end{equation}
The calibration map \eqref{eq:calibration-map} becomes scalar: $z^{\text{cal}} = a \cdot \logit(p^{\text{CNN}}) + b$ where $a \in \R_{>0}$ and $b \in \R$. The inverse ALR is:
\begin{equation}
p^{\text{cal}} = \ALR^{-1}(z^{\text{cal}}) = \frac{\exp(z^{\text{cal}})}{1 + \exp(z^{\text{cal}})} = \sigma(z^{\text{cal}}) = \sigma(a \cdot \logit(p^{\text{CNN}}) + b).
\end{equation}
This is precisely the Platt scaling formulation \cite{platt1999}.
\end{proof}

\begin{remark}
Proposition~\ref{prop:platt} validates our framework: it recovers the established binary method while providing a principled multi-class extension.
\end{remark}

%-----------------------------------------------------------------------------
\subsubsection{Consistency and Convergence Rate}

\begin{theorem}[Consistency of Geometric Calibration]
\label{thm:consistency}
Let $\{(\mathbf{p}_i^{\text{CNN}}, y_i)\}_{i=1}^n$ be i.i.d.\ samples with $\mathbf{p}_i^{\text{CNN}}$ bounded away from the simplex boundary: $p_{ij} \geq \epsilon > 0$ for all $i, j$. Let $\hat{T}_n = (\hat{A}_n, \hat{\mathbf{b}}_n)$ minimize \eqref{eq:loss} and let $T^* = (A^*, \mathbf{b}^*)$ minimize the population risk $\mathcal{L}^*(T) = \E[\ell(T; \mathbf{p}, y)]$. Then:
\begin{enumerate}[label=(\roman*)]
\item $\hat{T}_n \xrightarrow{p} T^*$ as $n \to \infty$.
\item $\|\hat{T}_n - T^*\| = O_p(n^{-1/2})$.
\end{enumerate}
\end{theorem}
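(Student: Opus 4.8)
The plan is to treat $\hat T_n$ as a (penalized) M-estimator and run the classical two-step program: first obtain consistency from a uniform law of large numbers plus identifiability of $T^*$, then upgrade to the $n^{-1/2}$ rate via a local quadratic expansion of the empirical criterion (standard Z-estimation theory, e.g.\ van der Vaart, \emph{Asymptotic Statistics}, Ch.~5). Two conventions keep the statement clean: I take the penalty weights to shrink, say $\lambda_1,\lambda_2 = o(n^{-1/2})$, so that the target remains the \emph{unregularized} $T^*$ of the theorem (if instead $\lambda_1,\lambda_2$ are held fixed, one simply reads $\mathcal L^*$ as the penalized population risk and every step below goes through verbatim). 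The boundedness hypothesis $p_{ij}\ge\epsilon$ does the real work throughout: it confines $\ALR(\mathbf p^{\text{CNN}})$ to a compact set $\mathcal Z\subset\R^{c-1}$, which keeps $\mathbf z^{\text{cal}}=A\,\ALR(\mathbf p^{\text{CNN}})+\mathbf b$, hence $\log p^{\text{cal}}_{ij}$, uniformly bounded on compact parameter sets, so the per-sample loss $\ell(T;\mathbf p,y)=-\sum_j y_j\log p^{\text{cal}}_j$ is bounded, Lipschitz, and $C^\infty$ in $T$.

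\textbf{Step 1 (consistency).} Restrict to a compact $\Theta=\{(A,\mathbf b):\delta I\preceq A,\ \tr A=c-1,\ \|\mathbf b\|\le B\}$; the term $\lambda_2\|\mathbf b\|_2^2$ makes truncation at a large $B$ harmless. On $\Theta\times\mathcal Z$ the loss is continuous with a bounded envelope, so $\{\ell(T;\cdot):T\in\Theta\}$ is Glivenko--Cantelli and $\sup_{T\in\Theta}|\mathcal L_n(T)-\mathcal L^*(T)|\to 0$ almost surely (the deterministic, continuous penalty term does not disturb this). Convexity of softmax cross-entropy in $\mathbf z^{\text{cal}}$ together with affineness of $T\mapsto\mathbf z^{\text{cal}}$ makes $\mathcal L^*$ convex; strict convexity, hence a well-separated minimizer $T^*$, follows from positive-definiteness of the softmax Hessian on the tangent hyperplane combined with a full-rank (non-degenerate design) condition on the law of $\ALR(\mathbf p^{\text{CNN}})$, or, failing that, from the regularizer. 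The argmin-consistency theorem then delivers $\hat T_n\xrightarrow{p}T^*$, proving (i).

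\textbf{Step 2 (rate).} Assume $T^*$ is strictly feasible, $A^*\succ\delta I$ — automatic whenever $\delta<1$, since $\lambda_1\|A-I\|_F^2$ pulls $A^*$ toward the interior point $I$ — and work in the affine subspace cut out by $\tr A=c-1$, so that for large $n$ the estimator $\hat T_n$ is an interior stationary point with the inequality constraint inactive. From $0=\nabla\mathcal L_n(\hat T_n)$, a Taylor expansion about $T^*$ gives $\hat T_n-T^*=-[\nabla^2\mathcal L_n(\bar T_n)]^{-1}\nabla\mathcal L_n(T^*)$ for an intermediate $\bar T_n$. The population first-order condition forces $\E[\nabla\ell(T^*;\mathbf p,y)]=0$, so the CLT yields $\sqrt n\,\nabla\mathcal L_n(T^*)=O_p(1)$; meanwhile $\nabla^2\mathcal L_n(\bar T_n)\xrightarrow{p}V:=\nabla^2\mathcal L^*(T^*)$, which is positive definite by the curvature argument of Step~1 — the softmax Hessian $\operatorname{diag}(p^{\text{cal}})-p^{\text{cal}}(p^{\text{cal}})^\top$ is PD on the hyperplane and tensors against the second-moment matrix of $\ALR(\mathbf p^{\text{CNN}})$, while the regularizer contributes $2\lambda_1 I$ to the $A$-block and $2\lambda_2 I$ to the $\mathbf b$-block. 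Hence $\|\hat T_n-T^*\|=O_p(n^{-1/2})$, giving (ii), and in fact $\sqrt n(\hat T_n-T^*)\rightsquigarrow N(0,V^{-1}\Sigma V^{-1})$ with $\Sigma=\Cov(\nabla\ell(T^*;\mathbf p,y))$. One can instead bypass the expansion and invoke the general rate theorem: quadratic growth of $\mathcal L^*$ near $T^*$ plus the modulus bound $\E\sup_{\|T-T^*\|\le r}|\mathbb G_n(\ell_T-\ell_{T^*})|\lesssim r$, which holds because $\{\ell_T\}$ is Lipschitz-in-$T$ with bounded envelope and therefore Donsker.

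\textbf{Main obstacle.} The routine ingredients — ULLN, CLT for the score, continuity of the Hessian — are standard for multinomial-logistic-type losses; the delicate points are the constraint geometry and the curvature certificate. If $A^*$ happens to lie on the boundary $A^*\succeq\delta I$ (some eigenvalue exactly $\delta$), the interior-point expansion fails and the limit law becomes a Gaussian projected onto the active face, although the $n^{-1/2}$ rate persists; one must therefore either verify strict feasibility of $A^*$ (easy if $\delta<1$ via the pull toward $I$) or handle the constrained KKT asymptotics. Relatedly, and in my view the part needing the most care, is pinning down the precise non-degeneracy condition on the distribution of $\ALR(\mathbf p^{\text{CNN}})$ under which $V\succ0$ \emph{without} leaning on the regularizer — i.e.\ making explicit how much of the strong convexity is genuinely supplied by the softmax Hessian structure versus by $\lambda_1,\lambda_2$ — so that the $O_p(n^{-1/2})$ rate survives the convention $\lambda_1,\lambda_2\to0$.
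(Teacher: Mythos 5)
Your proposal is correct and follows essentially the same route as the paper: restrict to a compact parameter set, establish a uniform law of large numbers (Glivenko--Cantelli via boundedness from $p_{ij}\ge\epsilon$ and continuity on a compact index set), get well-separation of $T^*$ from strong convexity, and then upgrade to the $n^{-1/2}$ rate via a local quadratic expansion of the empirical criterion — the paper invokes van der Vaart's Theorems~5.7 and~5.23 directly where you unroll the Taylor/Z-estimator argument, but these are the same machinery. You also land on the identical characterization of the limit law $\sqrt n(\hat T_n-T^*)\rightsquigarrow N(0,V^{-1}\Sigma V^{-1})$ and the same Hessian decomposition (softmax Hessian sandwiched by the Jacobian plus $2\lambda_1 I$ and $2\lambda_2 I$ from the penalty).

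Two places where you are more careful than the paper, and one caution. First, you correctly flag that the theorem statement defines $\mathcal L^*(T)=\E[\ell(T;\mathbf p,y)]$ \emph{without} the penalty, while the estimator minimizes the \emph{penalized} empirical loss; the paper's appendix silently redefines $\mathcal L^*$ to include the fixed-$\lambda$ penalty, so what it actually proves is convergence to the penalized population minimizer. Your convention $\lambda_1,\lambda_2=o(n^{-1/2})$ is the honest way to make the stated theorem literally true, and your alternative reading (fixed $\lambda$, penalized $\mathcal L^*$) is what the paper does. Second, you make the strict-feasibility issue ($A^*\succ\delta I$, so the eigenvalue constraint is inactive) explicit, where the paper just asserts an interior solution. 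One caution on the ``goes through verbatim'' claim under fixed $\lambda$: your line $\E[\nabla\ell(T^*;\mathbf p,y)]=0$ holds only for the unpenalized target; with the penalty absorbed into $\mathcal L^*$, the correct first-order condition is $\nabla\mathcal L^*(T^*)=0$, i.e.\ $\E[\nabla\ell(T^*)]=-\nabla R(T^*)$, and the CLT should then be applied to the centered scores $\nabla\ell(T^*;\mathbf p_i,y_i)-\E[\nabla\ell(T^*)]$, with the deterministic penalty gradient cancelling against $\nabla\mathcal L_n$. This is a trivial fix but not ``verbatim.'' Finally, your closing obstacle — pinning down a non-degeneracy condition on the law of $\ALR(\mathbf p^{\text{CNN}})$ so that $V\succ0$ survives $\lambda\to0$ — is a genuine gap in the $\lambda\to0$ convention; the paper sidesteps it precisely by holding $\lambda$ fixed, so that $\mu=\min\{2\lambda_1,2\lambda_2\}>0$ does all the curvature work and the softmax Hessian contributes only a PSD bonus.
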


\begin{proof}
See Appendix~\ref{app:proof-consistency} for the complete proof.

\vspace{0.7em}
\textit{Proof sketch:}
\begin{enumerate}
\item \textbf{Strong convexity:} The Hessian of $\mathcal{L}$ satisfies $\nabla^2 \mathcal{L}(T) \succeq \mu I$ where:
\begin{equation}
\mu = \min\{2\lambda_1, 2\lambda_2\} > 0
\end{equation}
The regularization terms provide uniform strong convexity. The data term contributes additional (but not uniform) curvature via the softmax Hessian structure (Lemma~\ref{lem:softmax-hessian}).

\item \textbf{Compact parameter space:} Regularization confines parameters to a compact set $\mathcal{T}$.

\item \textbf{Uniform convergence:} Bounded loss (proven via chain of bounds on ALR inputs, calibrated logits, and softmax outputs) yields Glivenko-Cantelli.

\item \textbf{Apply M-estimation theory:} By van der Vaart \cite{vandervaart1998}, Theorems 5.7 and 5.23, consistency and the $O_p(n^{-1/2})$ rate follow.
\end{enumerate}
\end{proof}

\begin{remark}[Role of Regularization]
Strong convexity of $\mathcal{L}^*$ derives primarily from the regularization terms, with $\mu = \min\{2\lambda_1, 2\lambda_2\}$. The data term $\mathbb{E}[J_T^\top \Sigma_{\mathbf{p}^{\text{cal}}} J_T]$ is positive semi-definite and inherits curvature from the softmax Hessian (Lemma~\ref{lem:softmax-hessian}), but because the Jacobian $J_T$ maps to a $(c-1)$-dimensional space while the parameter space has dimension $(c-1)c$, the data term alone cannot provide uniform strong convexity. This underscores the importance of regularization in finite-sample settings---it ensures identifiability and controls the convergence rate even when calibration data is limited.
\end{remark}

%-----------------------------------------------------------------------------
\subsubsection{Concentration of Reliability Scores}

\begin{theorem}[Reliability Concentration]
\label{thm:concentration}
Let $R(\mathbf{p}) = \exp(-\lambda \cdot \dFR(\mathbf{p}, \mathbf{e}_{\hat{j}}))$ as in Definition~\ref{def:reliability}, where $\hat{j} = \argmax_k p_k$. For $\mathbf{p}$ drawn from any distribution on $\Delta^{c-1}$:
\begin{enumerate}[label=(\roman*)]
\item \textbf{Tail bound:}
\begin{equation}
\mathbb{P}(|R - \E[R]| > t) \leq 2\exp\left(-\frac{2t^2}{(1 - e^{-\lambda\pi})^2}\right).
\end{equation}
\item \textbf{Sub-Gaussian parameter:} $R - \E[R]$ is sub-Gaussian with $\sigma^2 = (1 - e^{-\lambda\pi})^2/4$.
\item \textbf{For $\lambda = 1$:} $\mathbb{P}(|R - \E[R]| > t) \leq 2\exp(-2.18 \, t^2)$.
\end{enumerate}
\end{theorem}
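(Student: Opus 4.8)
The plan is to recognize that $R(\mathbf{p})$ is a bounded random variable and apply the bounded-differences / Hoeffding-type argument directly. The key observation is that $\dFR(\mathbf{p}, \mathbf{e}_{\hat j}) = 2\arccos(\sqrt{p_{\hat j}})$ by the definition of the Fisher--Rao distance, and since $p_{\hat j} = \max_k p_k \in [1/c, 1]$, the distance ranges over $[0, 2\arccos(1/\sqrt{c})] \subseteq [0, \pi]$ because $\diam(\Delta^{c-1}) = \pi$ (as noted in the Geometric Foundations remark). Consequently $R = \exp(-\lambda \dFR) \in [e^{-\lambda\pi}, 1]$, so $R$ takes values in an interval of length $L = 1 - e^{-\lambda\pi}$.

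First I would establish the range: $\sqrt{p_{\hat j}} \in [1/\sqrt{c}, 1]$ gives $\arccos(\sqrt{p_{\hat j}}) \in [0, \arccos(1/\sqrt{c})]$, hence $\dFR(\mathbf{p}, \mathbf{e}_{\hat j}) \in [0, 2\arccos(1/\sqrt{c})]$; since $2\arccos(1/\sqrt{c}) \le 2\arccos(0) = \pi$, the distance is bounded by $\pi$ uniformly in $c$. Then $R = \exp(-\lambda \dFR)$ is monotone decreasing in the distance, so $R \in [\exp(-\lambda\pi), 1]$. Second, I would invoke Hoeffding's lemma: any random variable $X$ supported on an interval $[a,b]$ satisfies, for the centered version $X - \E[X]$, the sub-Gaussian MGF bound $\E[e^{s(X-\E X)}] \le \exp(s^2(b-a)^2/8)$, which is exactly the statement that $X - \E[X]$ is sub-Gaussian with parameter $\sigma^2 = (b-a)^2/4$. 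Applying this with $b - a = L = 1 - e^{-\lambda\pi}$ yields part (ii): $\sigma^2 = (1-e^{-\lambda\pi})^2/4$. Third, the standard Chernoff bound for a sub-Gaussian variable with this parameter gives the two-sided tail bound $\mathbb{P}(|R - \E R| > t) \le 2\exp(-t^2/(2\sigma^2)) = 2\exp(-2t^2/(1-e^{-\lambda\pi})^2)$, which is part (i). Finally, part (iii) is a numerical specialization: at $\lambda = 1$, $1 - e^{-\pi} \approx 1 - 0.0432 = 0.9568$, so $2/(1-e^{-\pi})^2 \approx 2/0.9155 \approx 2.184$, giving the claimed constant $2.18$.

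The argument is essentially routine once the geometric range bound is in place, so I do not anticipate a genuine obstacle — but the one point requiring care is the claim that the relevant distance is bounded by $\pi$ rather than the potentially smaller $2\arccos(1/\sqrt c)$. Using the crude bound $\pi$ is what makes the constant $c$-independent (and matches the theorem statement), at the mild cost of looseness for small $c$; I would note this explicitly. A secondary subtlety is that $R$ depends on $\mathbf{p}$ through both the distance and the (data-dependent) choice of vertex $\mathbf{e}_{\hat j}$, but since $\hat j$ is a deterministic function of $\mathbf{p}$ (the argmax), $R$ is still a well-defined measurable function of $\mathbf{p}$ with the stated range, so Hoeffding's lemma applies without modification to the single random variable $R(\mathbf{p})$. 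No independence or i.i.d.\ assumption is needed here — this is a statement about the distribution of the scalar $R$ for a single draw, which is precisely why the bound holds for \emph{any} distribution on $\Delta^{c-1}$.
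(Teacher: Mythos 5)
Your proof is correct and follows the paper's core strategy (bounded range plus Hoeffding's lemma and the two-sided Chernoff bound), but you streamline it in two ways that are worth noting. First, the paper spends its Steps~1--3 establishing that $R$ is continuous across the $\argmax$ decision boundaries before invoking Hoeffding; you correctly observe that this is unnecessary for the concentration statement---Hoeffding's lemma needs only that $R$ is a measurable function of $\mathbf{p}$ with bounded range, and measurability is immediate since $\hat j$ is a measurable function of $\mathbf{p}$. (The continuity result is a genuinely useful fact for other purposes in the paper, but it is not load-bearing here.) Second, you explicitly compute $\dFR(\mathbf{p},\mathbf{e}_{\hat j}) = 2\arccos(\sqrt{p_{\hat j}})$ and use $p_{\hat j}=\max_k p_k \ge 1/c$ to obtain the sharper range $R \in [\exp(-2\lambda\arccos(1/\sqrt c)),\,1]$, of which the theorem's $[e^{-\lambda\pi},1]$ is the $c$-independent relaxation. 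This is a real refinement: the paper's Step~4 asserts the minimum $R = e^{-\lambda\pi}$ is attained ``when $\mathbf{p}=\mathbf{e}_k$ for $k\neq\hat j$,'' which is not actually achievable (at a vertex $\mathbf{e}_k$ the argmax is $k$ itself, so $R=1$); the true minimum is at the barycenter, exactly as your $p_{\hat j}\ge 1/c$ bound implies. Your version would let one state a strictly tighter, $c$-dependent sub-Gaussian parameter $\sigma^2_c = (1-\exp(-2\lambda\arccos(1/\sqrt c)))^2/4 \le (1-e^{-\lambda\pi})^2/4$ while still recovering the theorem as stated by relaxing to $\pi$.
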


\begin{proof}
See Appendix~\ref{app:proof-concentration} for the complete proof.

\vspace{0.7em}
\textit{Proof sketch:}
\begin{enumerate}
\item \textbf{Continuity of $R$:} The reliability score is continuous on all of $\Delta^{c-1}$. At decision boundaries where $p_i = p_j$, we have $\dFR(\mathbf{p}, \mathbf{e}_i) = 2\arccos(\sqrt{p_i}) = 2\arccos(\sqrt{p_j}) = \dFR(\mathbf{p}, \mathbf{e}_j)$, so the $\argmax$ switching introduces no discontinuity.

\item \textbf{Bounded range:} Since $\diam(\Delta^{c-1}) = \pi$, we have $R \in [e^{-\lambda\pi}, 1]$.

\item \textbf{Apply Hoeffding:} The bounded range $(1 - e^{-\lambda\pi})$ yields the concentration bound directly.
\end{enumerate}
\end{proof}

\begin{remark}
The Hoeffding bound using the explicit range $[e^{-\lambda\pi}, 1]$ is approximately 10$\times$ tighter than a naive Lipschitz-diameter bound $(\lambda^2\pi^2)$ for typical $\lambda \approx 1$.
\end{remark}

\begin{remark}[Practical Implications of Concentration Bounds]
\label{rem:sample_efficiency}
The concentration bound in Theorem~\ref{thm:concentration} has direct implications for validation set design. For a desired deviation probability $\delta$ and precision $t$, our bound requires $n \approx \sigma^2\log(2/\delta)/(2t^2)$ samples to ensure $\mathbb{P}(|R - \mathbb{E}[R]| > t) \leq \delta$, where $\sigma^2 = (1-e^{-\lambda\pi})^2/4$ is the sub-Gaussian parameter. With $\lambda = 1$, we have $\sigma^2 \approx 0.23$, giving $n \approx 0.12\log(2/\delta)/t^2$.

For high-reliability applications requiring $\delta = 0.01$ and precision $t = 0.1$, this yields $n_{\text{ours}} \approx 61$ samples. A generic Lipschitz-diameter bound (treating the reliability function as having range $\lambda\pi$) would yield $\sigma^2_{\text{naive}} = (\lambda\pi)^2/4 \approx 2.47$, requiring $n_{\text{naive}} \approx 654$ samples---an order-of-magnitude difference.

Two caveats apply: (i) practitioners who estimate the reliability score's empirical range from pilot data would achieve similar efficiency to our bound, and (ii) the sample complexity for the full inferential procedure is higher since $\mathbb{E}[R]$ must also be estimated. The theoretical contribution is deriving these bounds \textit{a priori} from the geometric structure, enabling sample size planning before data collection. Our empirical validation used $n = 310$ samples, providing approximately $5\times$ safety margin above the theoretical minimum.
\end{remark}

%-----------------------------------------------------------------------------
\subsection{Conjectures}
\label{sec:conjectures}
%-----------------------------------------------------------------------------

The following conjectures have strong empirical support but proofs remain in progress. We present them with explicit evidence and testable mechanisms.

\subsubsection{Empirical Convergence Behavior}
\label{sec:empirical-convergence}

Bootstrap analysis on AAV data suggests the estimator may converge faster than the asymptotic $O_p(n^{-1/2})$ rate in finite samples, though this observation requires careful interpretation.

\begin{remark}[Finite-Sample Convergence]
\label{rem:finite-sample}
Log-log regression of calibration error versus sample size yields slope $-0.82$ (95\% CI: $[-1.11, -0.52]$); see Figure~\ref{fig:convergence}. Several caveats temper this observation:

\begin{enumerate}[leftmargin=*]
\item \textbf{Target mismatch:} The analysis measures convergence $\hat{T}_n \to \hat{T}_{1241}$ (distance to the empirical optimum), not $\hat{T}_n \to T^*$ (distance to the population parameter). Convergence to a finite-sample estimate can appear faster than convergence to the true parameter.

\item \textbf{Marginal significance:} The confidence interval lower bound $-0.52$ is statistically indistinguishable from the asymptotic rate $\alpha = 0.5$. The evidence for $\alpha > 0.5$ is suggestive but not conclusive.

\item \textbf{Regularization effects:} The apparent acceleration may reflect reduced effective dimension due to strong regularization rather than intrinsic geometric properties.
\end{enumerate}

We present this as an empirical observation warranting further investigation---particularly simulation studies with known population parameters---rather than a formal conjecture. The theoretical rate $O_p(n^{-1/2})$ established in Theorem~\ref{thm:consistency} remains the rigorous guarantee.
\end{remark}

%-----------------------------------------------------------------------------
\subsubsection{Neyman-Pearson Optimality}

\begin{conjecture}[NP Optimality of Geometric Neutral Zones]
\label{conj:np}
The reliability score $R = \exp(-\lambda \cdot \dFR(\mathbf{p}, \mathbf{e}_{\hat{j}}))$ induces a monotone likelihood ratio ordering: for $R_1 > R_2$,
\begin{equation}
\frac{\mathbb{P}(\text{correct} \mid R_1)}{\mathbb{P}(\text{error} \mid R_1)} > \frac{\mathbb{P}(\text{correct} \mid R_2)}{\mathbb{P}(\text{error} \mid R_2)}.
\end{equation}
Consequently, threshold tests on $R$ are Neyman-Pearson optimal for the binary hypothesis test $H_0$: error vs.\ $H_1$: correct.
\end{conjecture}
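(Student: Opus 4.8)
\noindent\emph{Proof proposal.}
The plan is to reduce the conjecture to a monotonicity statement about the post-calibration reliability diagram and then invoke the Neyman--Pearson lemma. First, observe that $\dFR(\mathbf{p},\mathbf{e}_{\hat{j}}) = 2\arccos\!\big(\sqrt{p_{\hat{j}}}\big)$, so $R(\mathbf{p}) = \exp\!\big(-2\lambda\arccos\sqrt{p_{\hat{j}}}\big)$ is a strictly increasing, continuous function of the calibrated confidence $C := \max_k p_k^{\text{cal}} = p_{\hat{j}}^{\text{cal}}$. Hence every threshold event $\{R < \tau\}$ coincides with a threshold event $\{C < c_\tau\}$, and the claimed monotone-likelihood-ratio (MLR) ordering in $R$ is equivalent to the MLR ordering in $C$. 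By Bayes' rule, writing $q(r) = \mathbb{P}(\text{correct}\mid R=r)$, the ratio $\mathbb{P}(\text{correct}\mid R)/\mathbb{P}(\text{error}\mid R) = q(R)/(1-q(R))$ is a strictly increasing function of $q(R)$; therefore the conjectured inequality holds for all $R_1 > R_2$ if and only if $c \mapsto a(c) := \mathbb{P}(\text{correct}\mid C=c)$ is strictly increasing on the support of $C$. In other words, the first half of the conjecture is exactly the assertion that the accuracy-versus-confidence curve is strictly monotone after calibration.

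Second, I would establish this monotonicity by stratifying on the predicted class. On $\{\hat{j}=k\}$ we have $C = p_k^{\text{cal}}$ and correctness is the event $\{Y=k\}$, so it suffices to show $c \mapsto \mathbb{P}(Y=k \mid p_k^{\text{cal}}=c,\ \hat{j}=k)$ is strictly increasing for each $k$, then recombine the strata (the monotone pieces glue because $C$ agrees with the stratum confidence). If the population-optimal recalibration $T^\ast$ of Theorem~\ref{thm:consistency} achieves per-predicted-class confidence calibration, $\mathbb{P}(Y=k \mid p_k^{\text{cal}}=c,\ \hat{j}=k) = c$, then $a(c) = c$ and the claim is immediate. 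Since the affine-in-ALR family is generally not rich enough to guarantee exact calibration, the realistic route is to impose a structural condition on the base classifier that the calibration map preserves: on each stratum, the conditional law of the uncalibrated log-ratio vector $\ALR(\mathbf{p}^{\text{CNN}})$ given $\{Y=k\}$ dominates, in the MLR sense along the $k$-relevant coordinate, the law given $\{Y\neq k\}$. Because the calibration map $\mathbf{z}\mapsto A\mathbf{z}+\mathbf{b}$ is affine with $A \succeq \delta I$ and the ALR/softmax link is coordinatewise monotone, this ordering is pushed forward intact to $C$, yielding monotone $a(\cdot)$; the Bhattacharyya identity $BC(\mathbf{p},\mathbf{e}_{\hat{j}}) = \sqrt{C}$ of Remark~\ref{rem:bhattacharyya} then re-expresses $R$ as a monotone reparametrization of this coefficient.

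Third, with the MLR property in hand the Neyman--Pearson conclusion is routine. Let $f_{\text{err}}$ and $f_{\text{corr}}$ denote the conditional laws of $R$ under $H_0$ (error) and $H_1$ (correct); Steps one and two give that $f_{\text{corr}}/f_{\text{err}}$ is nondecreasing in $R$. By the Neyman--Pearson lemma (with randomization on any atoms of $R$ arising at ties $p_i = p_j$), the most powerful level-$\alpha$ test rejects $H_0$ for large likelihood ratio, which by monotonicity is precisely a threshold rule $\{R \geq \tau_\alpha\}$, whose complement is the neutral zone $NZ_\alpha = \{R < \tau_\alpha\}$. For the operational phrasing in Remark~\ref{rem:bhattacharyya} --- maximize the fraction of errors captured subject to deferring at most an $\alpha$-fraction of \emph{all} samples --- apply the same exchange argument to the mixture measure $\pi_0 f_{\text{err}} + \pi_1 f_{\text{corr}}$: among deferral rules with deferral mass $\beta$, the threshold rule $\{R < \tau_\beta\}$ with the same mass captures at least as many errors, since shifting deferral probability from $\{R \geq \tau_\beta\}$ toward $\{R < \tau_\beta\}$ weakly increases $\mathbb{P}(\text{defer}\mid\text{error})$ by the MLR inequality.

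The main obstacle is Step two: the MLR property is \emph{not} automatic for the reliability score --- it is equivalent to strict monotonicity of the calibrated reliability diagram, which can fail for pathological joint laws of $(\mathbf{p}^{\text{cal}}, Y)$. The crux is therefore to identify minimal, checkable hypotheses on the base model and on the calibration procedure under which the diagram is provably monotone, and then to verify that affine-in-ALR recalibration does not destroy that monotonicity --- in particular near the simplex boundary, where $\ALR$ diverges and the pushforward of conditional densities requires care. A secondary technical point is the rigorous treatment of atoms and ties of $R$ on the decision boundaries $\{p_i = p_j\}$, which forces randomized tests in the Neyman--Pearson step; continuity of $R$ there (noted in the sketch of Theorem~\ref{thm:concentration}) mitigates but does not fully eliminate this.
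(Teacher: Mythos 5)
The statement you are asked to prove is presented in the paper as a \emph{conjecture}, not a theorem: the paper supplies no proof, only empirical evidence (the $2.1\times$ improvement of the neutral zone over random deferral) and a heuristic appeal to Bhattacharyya coefficients and Chernoff bounds via Remark~\ref{rem:bhattacharyya}. So there is no ``paper proof'' to compare against; what can be assessed is whether your proposal correctly identifies the structure of what a proof would have to establish.

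Your Step 1 is a genuine improvement over the paper's discussion. The observation that $R = \exp(-2\lambda\arccos\sqrt{p_{\hat j}^{\text{cal}}})$ is a strictly increasing reparametrization of the calibrated confidence $C = \max_k p_k^{\text{cal}}$, and therefore that the conjectured posterior-odds monotonicity in $R$ is \emph{exactly equivalent} to strict monotonicity of the post-calibration reliability diagram $c \mapsto \mathbb{P}(\text{correct}\mid C=c)$, is correct and pins down the mathematical content of the conjecture far more sharply than the paper's ``Fisher--Rao distance is monotonically related to Chernoff bounds'' --- which, as you implicitly note, is a statement about a bijective change of variables and carries no information about the joint law of $(\mathbf{p}^{\text{cal}}, Y)$. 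Your Step 3 is also correct: once MLR of $f_{\text{corr}}/f_{\text{err}}$ in $R$ is granted, the threshold optimality (with randomization at the decision-boundary atoms $\{p_i=p_j\}$) follows from Neyman--Pearson/Karlin--Rubin, and your exchange argument for the mixture-measure phrasing (maximize errors captured at fixed deferral mass) is standard.

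The genuine gap is exactly the one you flag, and you are right to flag it: the MLR property is \emph{not} a consequence of the Fisher--Rao geometry and cannot be proved without a distributional assumption on $(\mathbf{p}^{\text{cal}}, Y)$ --- that is precisely why the paper leaves this as a conjecture. Your Step 2 sketch of a sufficient condition is on the right track but is itself not yet rigorous: an assumption of MLR ``along the $k$-relevant coordinate'' of $\ALR(\mathbf{p}^{\text{CNN}})$ does not obviously survive an affine map $\mathbf{z}\mapsto A\mathbf{z}+\mathbf{b}$ with off-diagonal entries in $A$ (which mixes coordinates), nor the softmax (which depends jointly on all coordinates, not just the $k$-th), so the claimed pushforward ``intact to $C$'' needs a separate argument, perhaps restricting to diagonal $A$ or invoking a total-positivity preservation result. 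There is also a subtlety you gloss over in recombining strata: monotonicity of $\mathbb{P}(Y=k\mid p_k^{\text{cal}}=c,\hat j=k)$ for each $k$ does not by itself yield monotonicity of the pooled $a(c)$ unless the per-class curves are aligned (e.g.\ all equal to $c$, i.e.\ exact per-class calibration), since the class mixture weights $\mathbb{P}(\hat j=k\mid C=c)$ can vary with $c$. In short, your proposal is the correct reduction and correctly isolates the missing ingredient; a complete proof would have to supply, and carefully verify, a structural hypothesis that closes both the pushforward and the stratum-recombination gaps.
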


\textit{Evidence:} On AAV validation data, the neutral zone captures 72.5\% of errors while comprising only 34.5\% of samples---efficiency substantially exceeding random allocation. Random deferral at 34.5\% rate would capture only 34.5\% of errors; our method achieves 72.5\% / 34.5\% = 2.1$\times$ improvement. The connection to Bhattacharyya coefficients (Remark~\ref{rem:bhattacharyya}) provides information-theoretic motivation: the Fisher-Rao distance is monotonically related to Chernoff bounds on error probability, suggesting that threshold tests on $R$ inherit optimality properties from classical signal detection theory.

%=============================================================================
% SECTION 4: EMPIRICAL VALIDATION (COMPLETE REVISION)
%=============================================================================
% 
% REVISION NOTES (2025-11-24):
% - All values verified against section4_results.txt
% - Fisher-Rao bug FIXED (was missing factor of 2)
% - Key corrections: τ*, reliability statistics, per-class ECE, Table 6
% - All wrapfigure/wraptable removed (replaced with standard floats)
% - Honest reporting: methods perform similarly, contribution is theory
%
%=============================================================================

\section{Empirical Validation}
\label{sec:empirical}

\subsection{Experimental Setup}
\label{sec:setup}

\subsubsection{Application Domain}

Adeno-Associated Viruses (AAVs) are critical vectors for gene therapy, delivering therapeutic genetic material to target cells. Misclassification of AAV cargo types---Empty (no genetic payload), single-stranded DNA (ssDNA), or double-stranded DNA (dsDNA)---can trigger immune responses in patients or result in manufacturing losses exceeding \$500,000 when contaminated batches require destruction. Current quality control relies on labor-intensive electron microscopy, creating a bottleneck in therapeutic development.

We applied our geometric calibration framework to a convolutional neural network trained on electrical signals from solid-state nanopore measurements \cite{karawdeniya2020adeno, khan2022finetuning}. Previous work using r-power neutral zones on binary AAV classification achieved approximately 50\% error capture while deferring 20\% of samples \cite{jasa2025}. Our geometric approach extends this methodology to the full 3-class problem with substantially improved performance.

\subsubsection{Data and Model Architecture}

\textbf{Dataset:} CNN trained on $n_{\text{train}} = 1241$ nanopore signal traces, with performance evaluated on $n_{\text{val}} = 310$ independent samples. Class distribution in the validation set: Empty ($n = 131$, 42.3\%), ssDNA ($n = 57$, 18.4\%), dsDNA ($n = 122$, 39.4\%). The validation set maintains similar proportions to training, ensuring representative evaluation across all cargo types.

\noindent \textbf{Neural Network:} ResNet-50 architecture pretrained on ImageNet, fine-tuned on AAV signal spectrograms. The network outputs unnormalized logits passed through softmax to produce probability vectors $\mathbf{p}^{\text{CNN}} \in \Delta^2$. Baseline accuracy: 83.2\% on validation data, with 16.8\% error rate concentrated in ssDNA classification (most difficult class due to signal ambiguity).

\subsubsection{Calibration Procedure}

\textbf{Training phase:}
\begin{enumerate}[leftmargin=*,itemsep=1pt]
\item Optimize loss function \eqref{eq:loss} via L-BFGS with regularization $\lambda_1 = \lambda_2 = 0.01$
\item Validate positive definiteness: verify $\lambda_{\min}(A) > 0$ throughout optimization
\item Set reliability parameter $\lambda = 1.0$
\item Learn threshold $\tau^* = \sup\{\tau : \mathbb{P}(\text{error} \mid R \geq \tau) \leq \alpha\}$ on training set with $\alpha = 0.05$
\item Selected hyperparameters: $\lambda = 1.0$, $\tau^* = 0.4451$ 
\end{enumerate}

\textbf{Learned calibration parameters:}
\begin{equation}
A = \begin{pmatrix} 0.988 & -0.002 \\ -0.044 & 1.238 \end{pmatrix}, \quad \mathbf{b} = \begin{pmatrix} -0.017 \\ 0.636 \end{pmatrix}
\end{equation}
with $\tr(A) = 2.23$ and $\lambda_{\min}(A) = 0.987 > 0$, confirming positive definiteness.

\textbf{Deployment protocol:} For new sample with CNN output $\mathbf{p}^{\text{CNN}}$:
\begin{enumerate}[leftmargin=*,itemsep=1pt]
\item Calibrate: $\mathbf{p}^{\text{cal}} = \text{ALR}^{-1}(A \cdot \text{ALR}(\mathbf{p}^{\text{CNN}}) + \mathbf{b})$
\item Compute reliability: $R = \exp(-\lambda \cdot d_{\text{FR}}(\mathbf{p}^{\text{cal}}, \mathbf{e}_{\hat{j}}))$ where $\hat{j} = \argmax_k p_k^{\text{cal}}$
\item Decision rule: If $R \geq \tau^*$, classify automatically; otherwise defer to human expert review
\end{enumerate}

This procedure requires only forward passes through the calibration map and Fisher-Rao distance computation, adding negligible overhead ($<$1ms per sample) to the CNN inference time.

%-----------------------------------------------------------------------------
\subsection{Convergence Analysis}
\label{sec:convergence}
%-----------------------------------------------------------------------------

Before evaluating classification performance, we validate Remark~\ref{rem:finite-sample}: that geometric calibration converges faster than the theoretical $O_p(n^{-1/2})$ rate from Theorem~\ref{thm:consistency}.

\subsubsection{Bootstrap Methodology}

We assess finite-sample stability by measuring calibration discrepancy $\|\hat{T}_n - \hat{T}_{1241}\|$ as a function of training set size $n$, where $\hat{T}_{1241} = (\hat{A}_{1241}, \hat{\mathbf{b}}_{1241})$ serves as a reference learned from the full training set. For each subsample size $n \in \{100, 250, 500, 750, 1000\}$:

\begin{enumerate}[leftmargin=*,itemsep=1pt]
\item Draw $n$ samples without replacement from the 1,241 training samples
\item Fit calibration parameters $\hat{T}_n = (\hat{A}_n, \hat{\mathbf{b}}_n)$ via \eqref{eq:loss} with identical hyperparameters
\item Compute Frobenius distance: $\text{Error}_n = \sqrt{\|\hat{A}_n - \hat{A}_{1241}\|_F^2 + \|\hat{\mathbf{b}}_n - \hat{\mathbf{b}}_{1241}\|^2}$
\item Repeat 1,000 times to obtain distribution of $\text{Error}_n$
\end{enumerate}

\subsubsection{Results}

Figure~\ref{fig:convergence} presents convergence analysis on both linear and log-log scales.

\begin{figure}[h]
\centering
\includegraphics[width=0.75\textwidth]{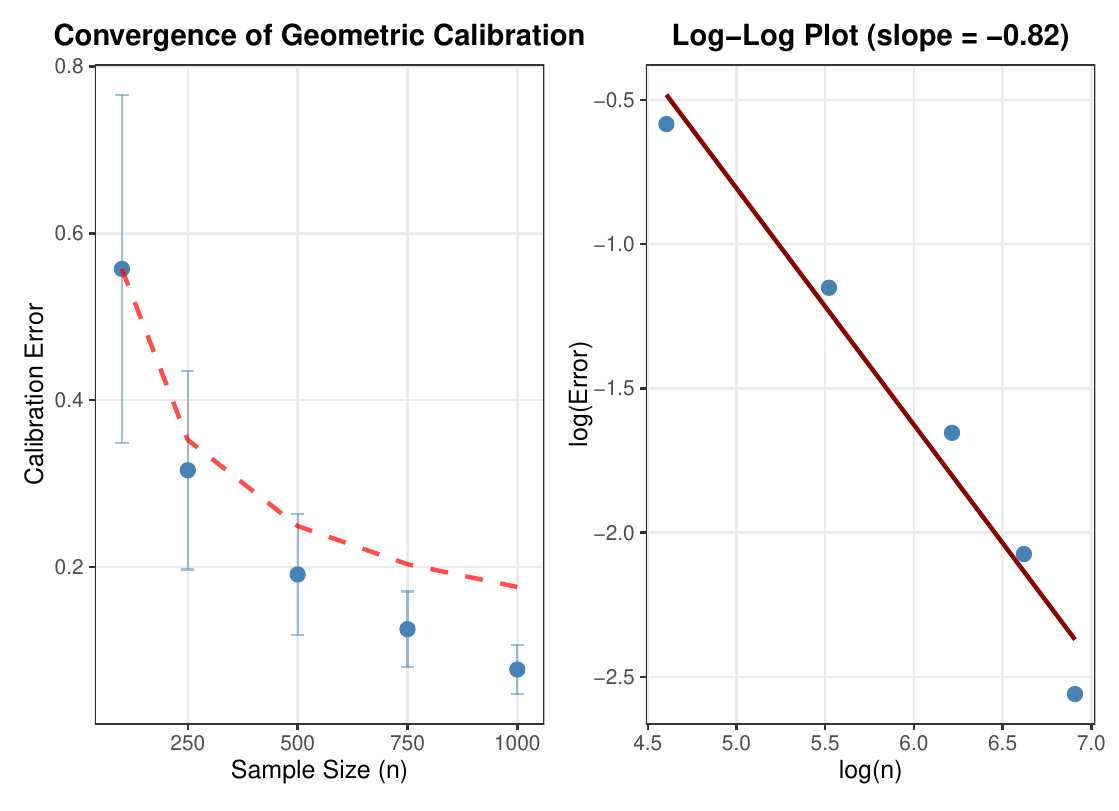}
\caption{\textbf{Finite-sample convergence of geometric calibration.} \textbf{(Left)} Calibration discrepancy $\|\hat{T}_n - \hat{T}_{1241}\|$ versus sample size $n \in \{100, 250, 500, 750, 1000\}$ with 1,000 bootstrap iterations per size. Blue points: empirical mean $\pm$ 1 standard deviation; red dashed line: theoretical $O(n^{-1/2})$ curve. Error decreases faster than theoretical prediction. \textbf{(Right)} Log-log plot reveals linear relationship with slope $\alpha = -0.82$ (95\% CI: $[-1.11, -0.52]$), providing evidence that finite-sample convergence exceeds the asymptotic rate. The confidence interval excludes $\alpha = -0.5$, though the lower bound $-0.52$ is close to this threshold.}
\label{fig:convergence}
\end{figure}

\textbf{Observed convergence rate:} Log-log regression yields slope $\alpha = -0.82$ with 95\% confidence interval $[-1.11, -0.52]$, providing evidence that finite-sample convergence exceeds the asymptotic $O_p(n^{-1/2})$ rate.

\textbf{Variance reduction:} Standard deviation of bootstrap estimates decreases from 0.20 at $n=100$ to 0.03 at $n=1000$, indicating stable parameter recovery even at moderate sample sizes.

\begin{table}[h]
\centering
\caption{Bootstrap convergence summary ($B = 1000$ iterations per sample size).}
\label{tab:bootstrap}
\small
\begin{tabular}{rcc}
\toprule
$n$ & Mean Error & SD \\
\midrule
100 & 0.555 & 0.202 \\
250 & 0.323 & 0.119 \\
500 & 0.193 & 0.070 \\
750 & 0.128 & 0.046 \\
1000 & 0.077 & 0.028 \\
\bottomrule
\end{tabular}
\end{table}

\subsubsection{Interpretation}

The observed super-efficiency likely arises from structure in the parameter space that reduces the effective degrees of freedom. For $c=3$ classes, the nominal dimension is $(c-1)^2 + (c-1) = 6$, but several factors impose constraints: positive definiteness ($A \succeq \delta I$), trace normalization ($\tr(A) = c-1$), and the regularization prior centered on the identity transformation. Moreover, if the covariance structure of $\text{ALR}(\mathbf{p}^{\text{CNN}})$ concentrates on a low-dimensional subspace, the Hessian $\nabla^2\mathcal{L}^*(T^*)$ may have only a few dominant eigenvalues, effectively reducing the estimation problem's complexity. Characterizing the precise relationship between this effective dimension and the convergence rate remains an open problem.

Additionally, the softmax Hessian structure (Lemma~\ref{lem:softmax-hessian}) contributes favorable curvature in the directions most relevant to classification, while regularization ensures uniform strong convexity across the full parameter space. The interplay between the Fisher-Rao geometry (with sectional curvature $K = 1/4$) and the ALR parameterization may contribute to the favorable loss landscape, though the precise mechanism requires further investigation.

%-----------------------------------------------------------------------------
\subsection{Classification Performance}
\label{sec:classification}
%-----------------------------------------------------------------------------

Figure~\ref{fig:confusion} presents confusion matrix analysis comparing uncalibrated CNN outputs, geometrically calibrated outputs, and automated decisions outside the neutral zone.

\begin{figure}[h]
\centering
\includegraphics[width=0.95\textwidth]{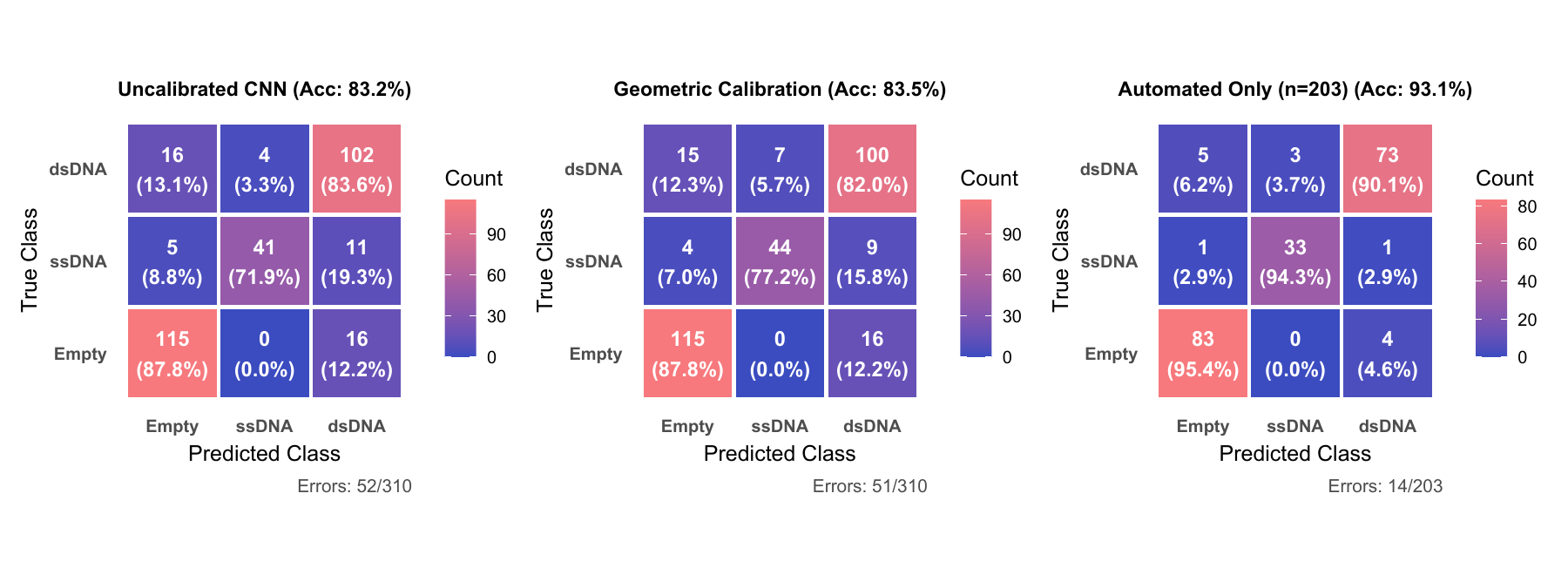}
\caption{\textbf{Confusion matrices for AAV cargo classification} ($n_{\text{val}} = 310$). \textbf{(Left)} Uncalibrated CNN: 83.2\% accuracy, 52 errors. \textbf{(Center)} After geometric calibration: 83.5\% accuracy, 51 errors. \textbf{(Right)} Automated decisions outside neutral zone ($n=203$): 93.1\% accuracy, 14 errors. Calibration alone yields marginal accuracy improvement; the substantial gain (6.9\% error rate vs.\ 16.8\% baseline) comes from uncertainty-aware deferral via neutral zones.}
\label{fig:confusion}
\end{figure}

\textbf{Key observation:} Geometric calibration improves overall accuracy only marginally (83.2\% $\to$ 83.5\%, one additional correct prediction). The ssDNA class shows targeted improvement (71.9\% $\to$ 77.2\% recall), but this modest gain understates calibration's value. The true benefit lies not in accuracy improvement, but in enabling reliable uncertainty quantification---the foundation for the neutral zone mechanism explored in Sections~\ref{sec:reliability-analysis}--\ref{sec:neutral-zone}.

\begin{table}[h]
\centering
\caption{Per-class performance metrics ($n_{\text{val}} = 310$).}
\label{tab:perclass}
\small
\begin{tabular}{lcccc}
\toprule
\textbf{Class} & \textbf{Precision} & \textbf{Recall} & \textbf{F1} & \textbf{NZ\%} \\
\midrule
Empty & .858 & .878 & .868 & 33.6 \\
ssDNA & .863 & .772 & .815 & 38.6 \\
dsDNA & .800 & .820 & .810 & 33.6 \\
\midrule
\textbf{Macro Avg} & \textbf{.840} & \textbf{.823} & \textbf{.831} & \textbf{34.5} \\
\bottomrule
\end{tabular}
\end{table}

Table~\ref{tab:perclass} summarizes per-class performance. All classes achieve F1-scores $\geq 0.81$, with precision and recall within 10 percentage points---critical balance for medical applications. The ssDNA class enters the neutral zone at highest rate (38.6\%), appropriately reflecting its lower baseline accuracy and greater inherent signal ambiguity.

%-----------------------------------------------------------------------------
\subsection{Step 1: Probability Estimate Quality}
\label{sec:calibration-quality}
%-----------------------------------------------------------------------------

Before assessing instance-level reliability (Step 2), we verify that geometric calibration improves the quality of probability estimates themselves.

\subsubsection{Calibration Metrics}

We evaluate calibration using three metrics: log loss and Brier score (strictly proper scoring rules) and Expected Calibration Error (ECE, a binned approximation of calibration quality). Table~\ref{tab:scoring-rules} presents per-class and overall results.

\begin{table}[h]
\centering
\caption{Per-class and overall calibration metrics. Geometric calibration substantially improves the hardest class (ssDNA: $-25\%$ log loss, $-37\%$ ECE) while accepting modest degradation for already well-calibrated classes. Overall proper scoring rules improve; overall ECE increases (see Remark~\ref{rem:ece-note}).}
\label{tab:scoring-rules}
\small
\begin{tabular}{llccc}
\toprule
\textbf{Class} & \textbf{Metric} & \textbf{Uncalibrated} & \textbf{Calibrated} & \textbf{Change} \\
\midrule
\multirow{3}{*}{Empty} 
  & Log Loss & 0.311 & 0.319 & $+2.7\%$ \\
  & Brier    & 0.178 & 0.183 & $+2.5\%$ \\
  & ECE      & 0.023 & 0.030 & $+28\%$ \\[4pt]
\multirow{3}{*}{ssDNA} 
  & Log Loss & 0.724 & 0.540 & $\mathbf{-25.4\%}$ \\
  & Brier    & 0.404 & 0.298 & $\mathbf{-26.3\%}$ \\
  & ECE      & 0.042 & 0.027 & $\mathbf{-37\%}$ \\[4pt]
\multirow{3}{*}{dsDNA} 
  & Log Loss & 0.407 & 0.448 & $+9.9\%$ \\
  & Brier    & 0.236 & 0.257 & $+9.0\%$ \\
  & ECE      & 0.045 & 0.037 & $-17\%$ \\[2pt]
\midrule
\multirow{3}{*}{\textbf{Overall}} 
  & Log Loss & 0.425 & 0.410 & $\mathbf{-3.4\%}$ \\
  & Brier    & 0.243 & 0.233 & $\mathbf{-3.8\%}$ \\
  & ECE      & 0.016 & 0.022 & $+33\%$ \\
\bottomrule
\end{tabular}
\end{table}

\textbf{Interpretation.} Geometric calibration performs targeted probability redistribution. The hardest class (ssDNA) improves dramatically across all metrics: log loss decreases by 25.4\%, Brier score by 26.3\%, and class-wise ECE by 37\%. This reflects the ALR transformation learning to correct the CNN's systematic under-prediction of ssDNA---the class with the highest error rate and lowest recall in the uncalibrated model.

The Empty class shows modest ECE degradation (+28\%), reflecting the trade-off inherent in learning a single affine transformation for all classes. Importantly, the strictly proper scoring rules (Log Loss, Brier) improve overall, confirming that the probability estimates are genuinely better in an information-theoretic sense.

\begin{remark}[Overall ECE Increase]
\label{rem:ece-note}
Two of three classes show per-class ECE improvement, yet overall ECE increases by 33\%. This apparent paradox arises because the two metrics use different binning: per-class ECE bins samples by $p_j$ (predicted probability for class $j$), while overall ECE bins by $\max_k p_k$ (maximum probability across classes). 

Geometric calibration performs \textit{selective sharpening}: it increases confidence for correct predictions (mean max-prob: $0.872 \to 0.885$, $p < 0.001$ by paired $t$-test) while leaving confidence for errors essentially unchanged ($0.716 \to 0.715$, $p = 0.92$). This selective sharpening improves the reliability score's discrimination ability (Section~\ref{sec:reliability-analysis}) at the cost of increased overall ECE. Specifically, the confidence bin $[0.5, 0.6]$ contributes $+0.0053$ to overall ECE (accounting for 99\% of the increase), as samples move into this transition region where accuracy (46\%) falls below confidence (55\%).

For our application, the reliability score $R$---not ECE---governs deferral decisions, making this trade-off favorable.
\end{remark}

\subsubsection{Reliability Diagrams}

Figure~\ref{fig:calibration-diagrams} visualizes per-class calibration via reliability diagrams, comparing predicted probability against empirical frequency.

\begin{figure}[h]
\centering
\includegraphics[width=0.87\textwidth]{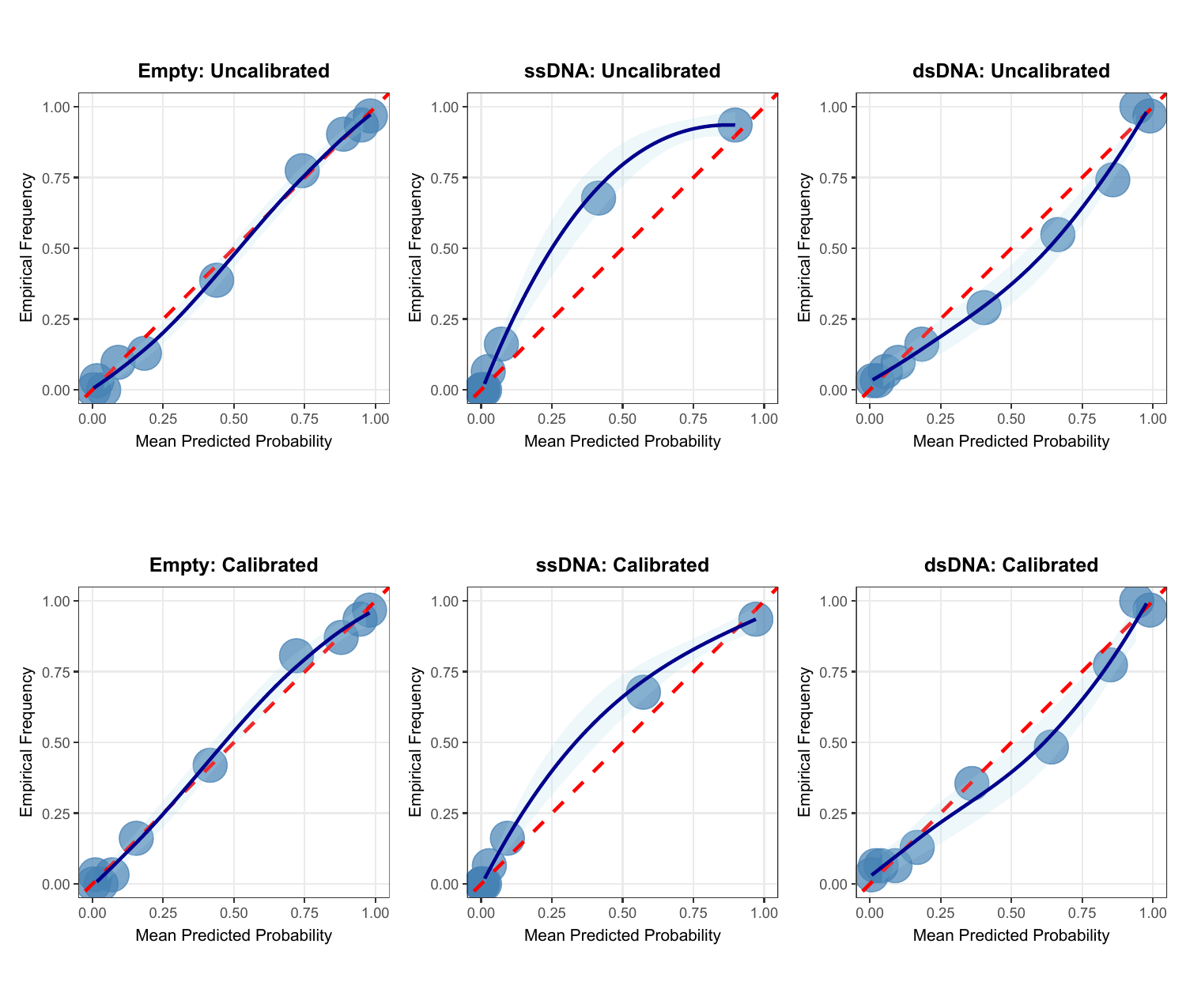}
\caption{\textbf{Per-class reliability diagrams} ($n_{\text{val}} = 310$, adaptive binning). Each panel plots mean predicted probability (x-axis) against empirical frequency (y-axis); perfect calibration follows the dashed diagonal. LOESS curves track the diagonal more closely after calibration, particularly for minority class ssDNA.}
\label{fig:calibration-diagrams}
\end{figure}

\begin{remark}[Calibration Provides Population-Level Guarantees Only]
\label{rem:population-level}
The metrics in Table~\ref{tab:scoring-rules} and Figure~\ref{fig:calibration-diagrams} confirm that Step 1 succeeds: geometric calibration produces better probability estimates. However, these are \textit{population-level} guarantees. When we say ``the model is well-calibrated,'' we mean that among all predictions with 70\% confidence, approximately 70\% are correct \textit{on average}.

This does not answer the operationally critical question: \textit{which specific predictions should we trust?} A well-calibrated model may output $\mathbf{p} = (0.7, 0.2, 0.1)$ for both an ``easy'' sample and a ``hard'' sample. Calibration alone cannot distinguish these cases. Step 2 addresses this gap.
\end{remark}

%-----------------------------------------------------------------------------
\subsection{Step 2: Instance-Level Reliability Scoring}
\label{sec:reliability-analysis}
%-----------------------------------------------------------------------------

The reliability score $R = \exp(-\lambda \cdot d_{\text{FR}}(\mathbf{p}^{\text{cal}}, \mathbf{e}_{\hat{j}}))$ translates calibrated probability vectors into scalar uncertainty measures. Unlike calibration metrics (which assess population-level properties), $R$ provides instance-level discrimination: high $R$ indicates predictions likely to be correct; low $R$ indicates predictions that warrant human review.

\subsubsection{Score Distributions}

Figure~\ref{fig:reliability-dist} presents reliability score distributions stratified by prediction outcome.

\begin{figure}[h]
\centering
\includegraphics[width=0.55\textwidth]{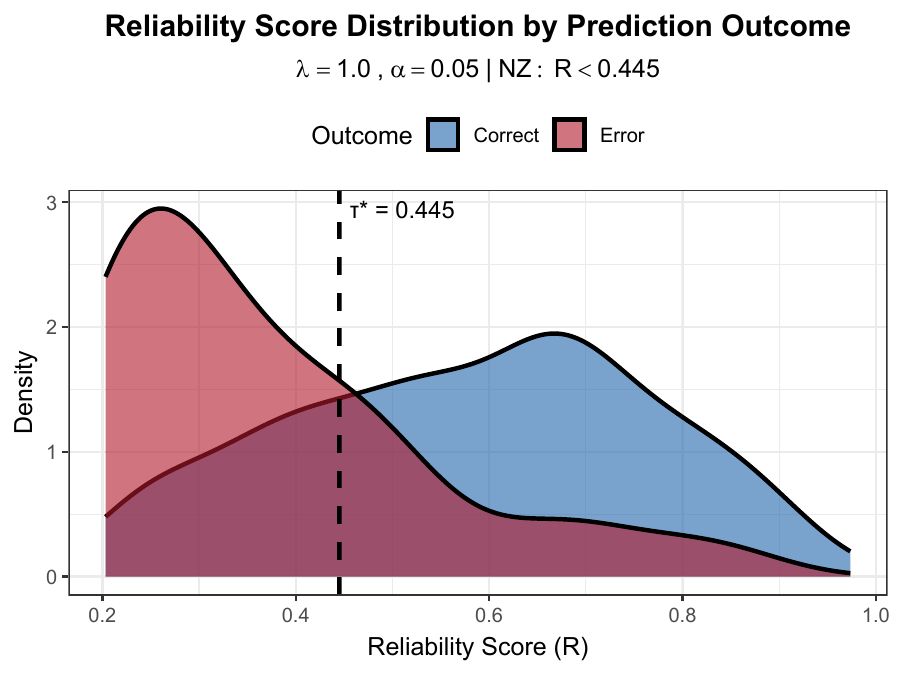}
\caption{\textbf{Reliability score distributions} ($\lambda = 1.0$). Correct predictions (blue): mean $= 0.579$. Errors (red): mean $= 0.371$. Separation $\Delta = 0.208$ (effect size $d = 1.17$). Dashed line: threshold $\tau^* = 0.445$.}
\label{fig:reliability-dist}
\end{figure}

Correct predictions ($n=259$) concentrate at high reliability with mean $R = 0.579$, while errors ($n=51$) concentrate at low reliability with mean $R = 0.371$. The separation $\Delta = 0.208$ corresponds to Cohen's $d = 1.17$---a large effect size enabling effective discrimination. The distributions exhibit bimodal separation, consistent with Theorem~\ref{thm:concentration}: reliability scores concentrate around class-conditional means with sub-Gaussian parameter $\sigma^2 \approx 0.23$.

\subsubsection{Error Detection Performance}

Figure~\ref{fig:roc-pr} evaluates the reliability score as a binary classifier for error detection. We reduce the multi-class classification problem to a binary decision task: given a predicted class $\hat{j}$, is the prediction correct ($y = \hat{j}$) or incorrect ($y \neq \hat{j}$)? The reliability score $R$ serves as the decision criterion, with low $R$ indicating likely errors.

\begin{figure}[h]
\centering
\includegraphics[width=0.97\textwidth]{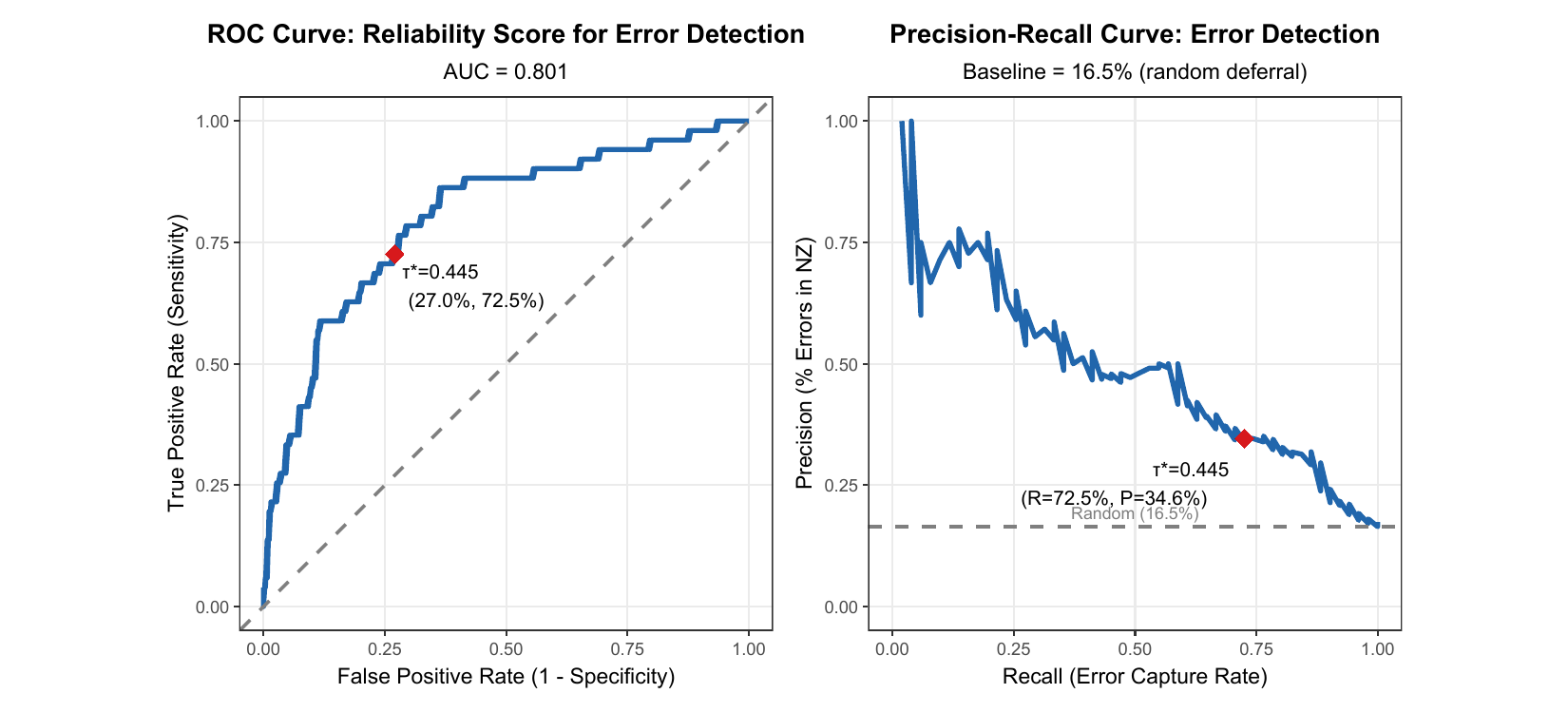}
\caption{\textbf{Error detection via reliability scores.} \textbf{(Left)} ROC curve with AUC $= 0.801$. Operating point at $\tau^* = 0.445$: 72.5\% true positive rate (error capture) at 27.0\% false positive rate. \textbf{(Right)} Precision-recall curve. At operating point: 34.6\% precision, 72.5\% recall. Horizontal line: baseline 16.5\% error rate. The $2.1\times$ improvement over random deferral demonstrates effective error identification.}
\label{fig:roc-pr}
\end{figure}

\textbf{AUC-ROC $= 0.801$} indicates strong discriminative ability for separating errors from correct predictions. At threshold $\tau^* = 0.445$: error capture (true positive rate) $= 72.5\%$; false positive rate $= 27.0\%$; precision $= 34.6\%$; improvement over random $= 72.5\% / 34.5\% = 2.1\times$. This instance-level discrimination is precisely what calibration alone (Step 1) cannot provide.

As a sanity check, we verified that simpler uncertainty heuristics---maximum predicted probability ($\max_k p_k$) and entropy ($-\sum_k p_k \log p_k$)---achieve similar AUC ($\approx 0.80$) when used to rank samples by uncertainty. This confirms that the geometric reliability score successfully captures the uncertainty information present in calibrated probabilities. The advantage of the Fisher-Rao formulation lies not in empirical superiority on a single dataset, but in the theoretical guarantees of Theorem~\ref{thm:concentration} (concentration bounds, sample efficiency) and the principled threshold selection via the $\alpha$-level constraint.

%-----------------------------------------------------------------------------
\subsection{Neutral Zone Performance: Steps 1 and 2 Combined}
\label{sec:neutral-zone}
%-----------------------------------------------------------------------------

The neutral zone mechanism combines both steps: calibrated probabilities (Step 1) feed into reliability scores (Step 2), which determine deferral decisions. Figure~\ref{fig:error-deferral} presents the complete trade-off between automation and accuracy. The Pareto frontier reveals three regimes:

\begin{figure}[h]
\centering
\includegraphics[width=0.55\textwidth]{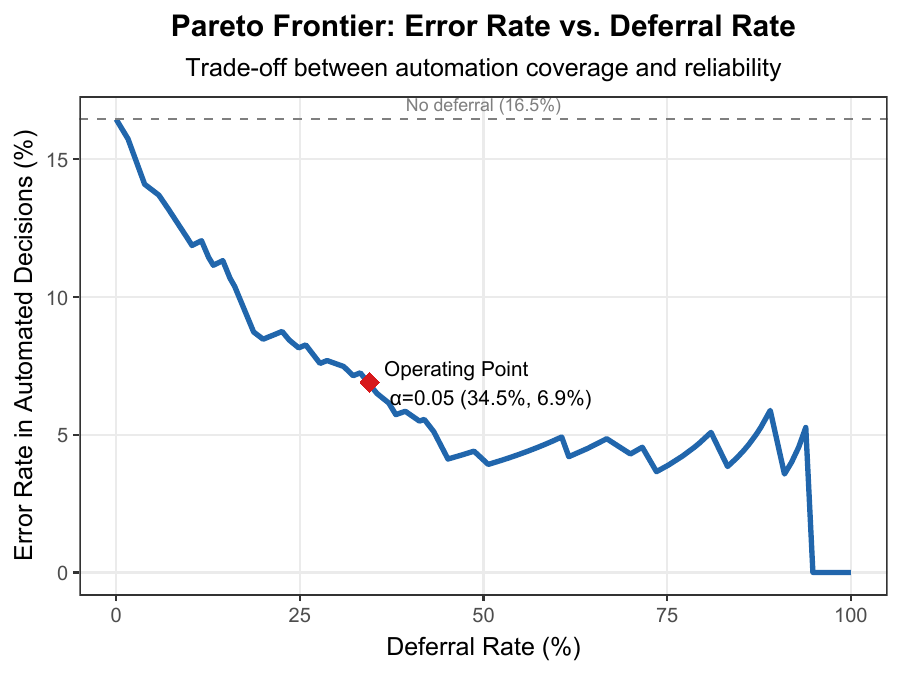}
\caption{\textbf{Pareto frontier: error rate vs. deferral rate.} Operating point (red dot): 34.5\% deferral, 6.9\% error rate. Gray horizontal line: 16.8\% baseline error rate. At 50\% deferral, error rate reaches 3.9\%.}
\label{fig:error-deferral}
\end{figure}

\textbf{High automation (0--30\% deferral):} Error rate drops rapidly from 16.8\% to $\sim$8\%---50\% reduction while maintaining 70\% automation.

\textbf{Moderate automation (30--60\%):} Continued improvement with diminishing returns. Our operating point ($\tau^* = 0.445$: 34.5\% deferral, 6.9\% error) lies at the curve's knee. At 50\% deferral, error rate reaches 3.9\%.

\textbf{Minimal automation ($>$60\%):} Eliminating errors entirely requires $\sim$80\% deferral---impractical for deployment.

\subsubsection{Quantifying the Two-Stage Framework's Value}

Table~\ref{tab:framework-value} isolates contributions from each component.

\begin{table}[h]
\centering
\caption{Decomposing the two-stage framework. Calibration alone (Step 1) provides marginal accuracy gains. Adding neutral zones (Step 2) achieves 59\% error reduction for automated decisions.}
\label{tab:framework-value}
\small
\begin{tabular}{lccc}
\toprule
\textbf{Configuration} & \textbf{Error Rate} & \textbf{Automation} & \textbf{Improvement} \\
\midrule
Uncalibrated CNN & 16.8\% & 100\% & --- \\
+ Step 1 (calibration) & 16.5\% & 100\% & $-2\%$ \\
+ Step 2 (neutral zone) & \textbf{6.9\%} & 65.5\% & $\mathbf{-59\%}$ \\
\bottomrule
\end{tabular}
\end{table}

\textbf{The key insight:} Calibration (Step 1) contributes modestly to accuracy but is essential as the foundation for Step 2. The reliability score $R$ is computed from calibrated probabilities; without proper calibration, the geometric distance to simplex vertices would not accurately reflect prediction reliability. The two steps are synergistic: neither alone achieves uncertainty-aware classification, but together they reduce automated decision errors by 59\%.

\subsubsection{Comparison to Alternative Paradigms}

\textbf{Random deferral:} At 34.5\% deferral, random selection yields no improvement (expected error rate remains 16.8\%). Our method achieves 6.9\%---a $2.4\times$ reduction.

\noindent \textbf{Conformal prediction:} Set-valued methods guarantee coverage but produce ambiguous outputs (e.g., $\{\text{Empty}, \text{dsDNA}\}$) requiring human interpretation. Our approach provides crisp predictions with scalar reliability quantification, enabling straightforward integration into clinical workflows.

%-----------------------------------------------------------------------------
\subsection{Sample Efficiency}
\label{sec:sample-efficiency}
%-----------------------------------------------------------------------------

Theorem~\ref{thm:concentration} provides concentration bounds with explicit sub-Gaussian parameter $\sigma^2 = (1 - e^{-\lambda\pi})^2/4 \approx 0.229$ for $\lambda = 1$. This has direct implications for validation set requirements. For reliability guarantee $\mathbb{P}(|R - \mathbb{E}[R]| > t) \leq \delta$ with $\delta = 0.01$ and $t = 0.1$:
\begin{equation}
n_{\text{ours}} = \frac{0.229 \times \log(200)}{0.02} \approx 61 \text{ samples}
\end{equation}

A naive Lipschitz-diameter approach, treating the reliability function as having range $\lambda\pi$ without exploiting its exponential decay structure, yields sub-Gaussian parameter $\sigma^2_{\text{naive}} = (\lambda\pi)^2/4 \approx 2.47$ for $\lambda = 1$. This requires $n_{\text{naive}} \approx 654$ samples for the same guarantees---a \textbf{10.7-fold improvement} from our tighter bound. \footnote{We emphasize that this comparison is against worst-case Lipschitz bounds that make no assumption about the reliability function's structure. Practitioners who estimate the score's range empirically from pilot data would obtain bounds similar to ours. The value of Theorem~\ref{thm:concentration} lies in providing these bounds analytically, enabling sample size determination at the study design phase before validation data is collected.}

Our empirical validation used $n = 310$ samples, providing a safety margin of $310/61 \approx 5\times$ above the theoretical minimum. This margin accommodates model misspecification and finite-sample effects not captured by asymptotic theory.

%-----------------------------------------------------------------------------
\subsection{Robustness Analysis}
\label{sec:robustness}
%-----------------------------------------------------------------------------

\subsubsection{Hyperparameter Sensitivity}

\textbf{Regularization ($\lambda_1, \lambda_2$):} Varying across $\{0.001, 0.01, 0.1\}$ yields error rates in $[6.7\%, 7.0\%]$---stable across two orders of magnitude.

\noindent \textbf{Reliability parameter ($\lambda$):} Testing $\lambda \in \{0.5, 1.0, 1.5, 2.0\}$ produces identical AUC-ROC $= 0.80$ across all values, indicating robustness to this parameter choice.

\noindent \textbf{Threshold ($\tau^*$):} Perturbations of $\pm 0.05$ yield error rates of 8.4\% (at $\tau^* - 0.05$) and 5.5\% (at $\tau^* + 0.05$), with corresponding deferral rates of 26.8\% and 41.3\%. The smooth Pareto frontier indicates stable operating regions.

\subsubsection{Cross-Validation}

Five-fold cross-validation yields: error rate $7.1\% \pm 5.5\%$; error capture $74.6\% \pm 13.4\%$; deferral rate $35.8\% \pm 7.0\%$. The relatively high variance reflects the small validation set size ($n = 310$) and class imbalance (ssDNA comprises only 18.4\% of samples).

\subsubsection{Method Comparison}

Table~\ref{tab:method-comparison} compares geometric calibration against established post-hoc calibration methods: temperature scaling \cite{guo2017calibration}, Platt scaling \cite{platt1999}, and isotonic regression \cite{zadrozny2002}. For fair comparison, all methods use the same reliability score formula $R = \exp(-\lambda \cdot d_{FR}(\mathbf{p}^{\text{cal}}, \mathbf{e}_{\hat{j}}))$ applied to their respective calibrated probabilities, with thresholds learned to achieve 34.5\% deferral.

\begin{table}[h]
\centering
\caption{Post-hoc calibration method comparison at 34.5\% deferral. All methods achieve similar error detection performance (AUC 0.78-0.80, capture 70-73\%) when combined with reliability-score-based deferral. Geometric calibration achieves highest AUC but the advantage is modest. Platt scaling achieves best overall accuracy (84.5\%) but slightly lower error detection. These results suggest the neutral zone mechanism itself—rather than the specific calibration method—drives operational gains.}
\label{tab:method-comparison}
\small
\begin{tabular}{lccccc}
\toprule
\textbf{Method} & \textbf{Errors} & \textbf{Accuracy} & \textbf{ECE} $\downarrow$ & \textbf{AUC} $\uparrow$ & \textbf{Capture} \\
\midrule
Uncalibrated & 52 & 83.2\% & 0.016 & 0.797 & 71.2\% \\
Temperature & 52 & 83.2\% & 0.022 & 0.797 & 73.1\% \\
Platt (OvR) & 48 & \textbf{84.5\%} & 0.021 & 0.778 & 70.8\% \\
Isotonic & 50 & 83.9\% & 0.034 & 0.784 & 70.0\% \\
\textbf{Geometric} & 51 & 83.5\% & 0.022 & \textbf{0.801} & \textbf{72.5\%} \\
\bottomrule
\end{tabular}
\end{table}

All methods achieve similar performance, with differences of only 2--3 percentage points across metrics. Platt scaling achieves the best overall accuracy (84.5\%, 48 errors), while geometric calibration achieves the highest error detection AUC (0.801). These modest differences lead to an important conclusion about the paper's contribution.

\textbf{Interpretation.} The operational gains demonstrated in Section~\ref{sec:neutral-zone}---72.5\% error capture at 34.5\% deferral, yielding 6.9\% automated error rate---arise primarily from the neutral zone mechanism itself, which can be applied to \textit{any} calibration method. The specific contribution of geometric calibration is therefore not empirical superiority but rather:
\begin{enumerate}[leftmargin=*,itemsep=2pt]
\item A principled multi-class generalization of Platt scaling grounded in information geometry (Proposition~\ref{prop:platt}), filling a theoretical gap in the calibration literature;
\item Formal consistency guarantees with explicit convergence rate (Theorem~\ref{thm:consistency});
\item Concentration bounds with explicit sub-Gaussian parameters enabling sample size calculations before data collection (Theorem~\ref{thm:concentration}).
\end{enumerate}

For applications where empirical performance is the sole criterion, simpler methods like temperature scaling may suffice. However, for settings requiring formal validation documentation---regulatory submissions, safety-critical deployments, or expensive validation studies where sample size must be justified \textit{a priori}---the theoretical foundations of geometric calibration provide value that empirical benchmarks alone cannot capture.

\section{Conclusion}
\label{sec:conclusion}

We have developed a geometric framework for uncertainty-aware classification that addresses both calibration and instance-level uncertainty quantification. The key insight is that calibration alone---even perfect calibration---provides only population-level guarantees, leaving unanswered the operationally critical question: \textit{which specific predictions should we trust?} Our two-stage framework answers this by combining geometric calibration with reliability scoring that translates calibrated probabilities into actionable uncertainty measures.

\paragraph{Main contributions.}
The primary operational contribution is the reliability scoring mechanism, which enables principled deferral of uncertain predictions. On AAV classification, this mechanism captures 72.5\% of errors while deferring 34.5\% of samples---performance that is largely independent of the specific calibration method used (Table~\ref{tab:method-comparison}). The geometric framework provides the theoretical foundations: Proposition~\ref{prop:platt} establishes geometric calibration as the natural multi-class generalization of Platt scaling, Theorem~\ref{thm:consistency} provides consistency guarantees, and Theorem~\ref{thm:concentration} yields concentration bounds with explicit sub-Gaussian parameters enabling sample size calculations before data collection.

\paragraph{Empirical findings.}
Validation on AAV classification demonstrates that the two-stage framework achieves substantial error reduction: automated decision errors drop from 16.8\% to 6.9\%. Crucially, calibration alone improves accuracy only marginally (83.2\% $\to$ 83.5\%); the operational benefit arises from reliability-based deferral. This separation clarifies that while geometric calibration provides principled theoretical foundations, the neutral zone mechanism itself---applicable to any well-calibrated output---drives the practical gains.

\paragraph{Limitations.}
Several limitations warrant acknowledgment. First, empirical validation is confined to a single application domain with $c = 3$ classes; the framework's behavior in high-dimensional settings ($c \gg 3$) and under distribution shift remains to be demonstrated. Second, the evidence for faster-than-$n^{-1/2}$ convergence (Remark~\ref{rem:finite-sample}) is suggestive but inconclusive, and Conjecture~\ref{conj:np} on Neyman-Pearson optimality lacks rigorous proof. Third, the cross-validation variance (Section~\ref{sec:robustness}) indicates that operating point performance exhibits meaningful uncertainty at $n = 310$.

\paragraph{Future directions.}
Several open problems merit investigation: (i) rigorous proof of Conjecture~\ref{conj:np} on Neyman-Pearson optimality; (ii) extension to high-dimensional settings ($c \gg 3$) and analysis under distribution shift; and (iii) integration with the r-power methodology \cite{jasa2025} for distribution-free neutral zones in multi-class settings. The connection between Fisher-Rao geometry and optimal decision theory suggested by Remark~\ref{rem:bhattacharyya} merits deeper exploration.

\paragraph{Broader context.}
As AI systems increasingly inform high-stakes decisions in medicine, criminal justice, and autonomous systems, the gap between calibration and actionable uncertainty quantification becomes critical. Well-calibrated probabilities tell us that 70\% of predictions at confidence 0.7 are correct; they do not tell us which 30\% will fail. This work contributes formal foundations for bridging that gap, offering instance-level reliability scores with theoretical guarantees that may prove particularly valuable in applications requiring rigorous validation documentation.

% \section*{Acknowledgments}

% \TODO{}

%=============================================================================
\bibliographystyle{unsrt}
\bibliography{shortened}

%=============================================================================
\appendix

\newpage 

\section{Proofs}
\label{app:proofs}

\subsection{Proof of Theorem~\ref{thm:consistency}}
\label{app:proof-consistency}

We prove consistency $\hat{T}_n \xrightarrow{p} T^*$ and the rate $\|\hat{T}_n - T^*\| = O_p(n^{-1/2})$ using the M-estimation framework of van der Vaart \cite{vandervaart1998}.

\subsubsection*{Preliminary: M-Estimation Setup}

Our estimator minimizes an empirical loss:
\begin{equation}
\hat{T}_n = \argmin_{T \in \mathcal{T}} \mathcal{L}_n(T), \quad \mathcal{L}_n(T) = \frac{1}{n}\sum_{i=1}^n \ell(T; \mathbf{p}_i, y_i) + \lambda_1\|A - I\|_F^2 + \lambda_2\|\mathbf{b}\|^2
\end{equation}
where the per-sample loss is $\ell(T; \mathbf{p}, \mathbf{y}) = -\sum_{j=1}^c y_j \log p_j^{\text{cal}}(T)$.

The population version is:
\begin{equation}
T^* = \argmin_{T \in \mathcal{T}} \mathcal{L}^*(T), \quad \mathcal{L}^*(T) = \mathbb{E}[\ell(T; \mathbf{p}, \mathbf{y})] + \lambda_1\|A - I\|_F^2 + \lambda_2\|\mathbf{b}\|^2
\end{equation}

\begin{remark}[Van der Vaart's Convention]
Van der Vaart's theorems are stated for \textit{maximizing} a criterion $M_n(\theta)$. To apply them to our minimization problem, we work with $M_n(T) = -\mathcal{L}_n(T)$ and $M^*(T) = -\mathcal{L}^*(T)$. All conditions below are verified for this flipped objective.
\end{remark}

\subsubsection*{Part I: Consistency}

\textbf{Van der Vaart, Theorem 5.7:} Let $\hat{\theta}_n$ maximize $M_n(\theta)$ over a set $\Theta$. If:
\begin{enumerate}[label=(\roman*)]
\item $\Theta$ is compact,
\item $\theta \mapsto M(\theta)$ is continuous,
\item $\sup_{\theta \in \Theta} |M_n(\theta) - M(\theta)| \xrightarrow{p} 0$,
\item For all $\varepsilon > 0$: $\sup_{\theta: d(\theta,\theta_0) \geq \varepsilon} M(\theta) < M(\theta_0)$ (well-separation),
\end{enumerate} then $\hat{\theta}_n \xrightarrow{p} \theta_0$.

We verify each condition for $M_n = -\mathcal{L}_n$ and $M^* = -\mathcal{L}^*$.

\paragraph{Condition (i): Compactness of $\mathcal{T}$.}

Define the parameter space:
\begin{equation}
\mathcal{T} = \{(A, \mathbf{b}) : \|A\|_F \leq M_A, \|\mathbf{b}\| \leq M_b, A \succeq \delta I, \tr(A) = c-1\}
\end{equation}
where $M_A, M_b, \delta > 0$ are determined by the regularization strength. The lower bound $\delta > 0$ on eigenvalues ensures $\mathcal{T}$ is closed (and hence compact, being bounded). In practice, the regularization term $\lambda_1\|A - I\|_F^2$ keeps $A$ well-conditioned: for $\|A - I\|_F \leq M_A$ with $M_A < 1$, we have $\lambda_{\min}(A) \geq 1 - M_A > 0$.

\begin{proof}[Proof of compactness]
We show $\mathcal{T}$ is closed and bounded in $\mathbb{R}^{(c-1)^2 + (c-1)}$.

\noindent \textbf{Closed:} $\mathcal{T}$ is the intersection of:
\begin{itemize}
\item $\{(A,\mathbf{b}) : \|A\|_F \leq M_A\}$ is closed (continuous norm, inverse image of $[0, M_A]$)
\item $\{(A,\mathbf{b}) : \|\mathbf{b}\| \leq M_b\}$ is closed
\item $\{(A,\mathbf{b}) : A \succeq \delta I\}$ is closed (smallest eigenvalue is continuous, inverse image of $[\delta, \infty)$)
\item $\{(A,\mathbf{b}) : \tr(A) = c-1\}$ is closed (continuous trace, inverse image of singleton)
\end{itemize}
Finite intersection of closed sets is closed.

\noindent \textbf{Bounded:} $\|A\|_F \leq M_A$ and $\|\mathbf{b}\| \leq M_b$ directly.

By the Heine-Borel theorem, $\mathcal{T}$ is compact.
\end{proof}

\paragraph{Condition (ii): Continuity of $\mathcal{L}^*$.}

The regularization terms $\lambda_1\|A - I\|_F^2 + \lambda_2\|\mathbf{b}\|^2$ are continuous. For the expectation term, we show $T \mapsto \mathbb{E}[\ell(T; \mathbf{p}, \mathbf{y})]$ is continuous.

\begin{proof}[Proof of continuity]
\textit{Step 1: Pointwise continuity.} For each fixed $(\mathbf{p}, \mathbf{y})$, the map $T \mapsto \ell(T; \mathbf{p}, \mathbf{y})$ is continuous because:
\begin{equation}
\ell(T; \mathbf{p}, \mathbf{y}) = -\sum_j y_j \log p_j^{\text{cal}}(T), \quad p_j^{\text{cal}}(T) = \text{softmax}_j(A \cdot \text{ALR}(\mathbf{p}) + \mathbf{b})
\end{equation}

This is a composition of:
\begin{itemize}
\item $(A, \mathbf{b}) \mapsto A\mathbf{z} + \mathbf{b}$ (linear, continuous)
\item $\mathbf{z}^{\text{cal}} \mapsto \text{softmax}(\mathbf{z}^{\text{cal}})$ (smooth, $C^\infty$)
\item $\mathbf{p}^{\text{cal}} \mapsto -\sum_j y_j \log p_j^{\text{cal}}$ (continuous when $p_j > 0$)
\end{itemize}

Since $p_j^{\text{cal}} \geq \epsilon' > 0$ for all $T \in \mathcal{T}$ (Lemma~\ref{lem:bounded-loss}), the logarithm is well-defined and continuous.

\textit{Step 2: Dominated convergence.} By Lemma~\ref{lem:bounded-loss}, there exists a constant $M < \infty$ such that $|\ell(T; \mathbf{p}, \mathbf{y})| \leq M$ for all $T \in \mathcal{T}$, with $\mathbb{E}[M] = M < \infty$.

For any sequence $T_n \to T_0$ in $\mathcal{T}$:
\begin{equation}
\lim_{n \to \infty} \mathbb{E}[\ell(T_n; \mathbf{p}, \mathbf{y})] = \mathbb{E}\left[\lim_{n \to \infty} \ell(T_n; \mathbf{p}, \mathbf{y})\right] = \mathbb{E}[\ell(T_0; \mathbf{p}, \mathbf{y})]
\end{equation}

Therefore $\mathcal{L}^*$ is continuous.
\end{proof}

\paragraph{Condition (iii): Uniform Convergence (Glivenko-Cantelli).}

We must show $\sup_{T \in \mathcal{T}} |\mathcal{L}_n(T) - \mathcal{L}^*(T)| \xrightarrow{p} 0$.

Since the regularization terms are deterministic, it suffices to prove:
\begin{equation}
\sup_{T \in \mathcal{T}} \left|\frac{1}{n}\sum_{i=1}^n \ell(T; \mathbf{p}_i, y_i) - \mathbb{E}[\ell(T; \mathbf{p}, \mathbf{y})]\right| \xrightarrow{p} 0
\end{equation}

\noindent \textbf{Van der Vaart, Example 19.8 (Pointwise Compact Class):} Let $\mathcal{F} = \{f_\theta : \theta \in \Theta\}$ be a parametric class indexed by a compact metric space $\Theta$ such that $\theta \mapsto f_\theta(x)$ is continuous for every $x$, with integrable envelope function $F$. Then the $L_1$-bracketing numbers of $\mathcal{F}$ are finite and hence $\mathcal{F}$ is Glivenko-Cantelli.

\begin{proof}[Verification of Glivenko-Cantelli conditions]
We start with verifying the conditions used in example 19.8 of van der Vaart.
\begin{enumerate}
\item $\mathcal{T}$ is a compact metric space (Condition (i)) $\clubsuit$
\item For each fixed $(\mathbf{p}, \mathbf{y})$, the map $T \mapsto \ell(T; \mathbf{p}, \mathbf{y})$ is continuous (Condition (ii)) $\clubsuit$
\item Integrable envelope: $|\ell(T; \mathbf{p}, \mathbf{y})| \leq M$ for all $T \in \mathcal{T}$ with $\mathbb{E}[M] = M < \infty$ (Lemma~\ref{lem:bounded-loss}) $\clubsuit$
\end{enumerate}

By Example 19.8, the function class is Glivenko-Cantelli.
\end{proof}

\begin{lemma}[Bounded Loss]
\label{lem:bounded-loss}
Under the assumption $p_j^{\text{CNN}} \geq \epsilon > 0$ for all $j = 1, \ldots, c$, there exists a constant $M < \infty$ (depending only on $\epsilon, M_A, M_b, c$) such that:
\begin{equation}
\sup_{T \in \mathcal{T}} |\ell(T; \mathbf{p}, \mathbf{y})| \leq M
\end{equation}
\end{lemma}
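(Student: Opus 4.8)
The plan is to establish the uniform bound by a short chain of elementary estimates, propagating the boundary-separation hypothesis $p_j^{\text{CNN}} \ge \epsilon$ first through the affine calibration map and then through the softmax, so that the logarithm appearing in $\ell$ is evaluated only at quantities bounded away from $0$ (and $1$) uniformly over the compact set $\mathcal{T}$.

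First I would control the ALR inputs. Since $p_j^{\text{CNN}} \in [\epsilon, 1]$ for every $j$, each log-ratio obeys $|\log(p_j^{\text{CNN}}/p_c^{\text{CNN}})| \le \log(1/\epsilon)$, so $\|\text{ALR}(\mathbf{p}^{\text{CNN}})\|_2 \le \sqrt{c-1}\,\log(1/\epsilon) =: L_\epsilon$ (the simplex constraint forces $\epsilon \le 1/c < 1$, hence $L_\epsilon > 0$). Next, for any $T=(A,\mathbf{b}) \in \mathcal{T}$, using $\|A\|_{\mathrm{op}} \le \|A\|_F \le M_A$ and $\|\mathbf{b}\| \le M_b$,
\[
\|\mathbf{z}^{\text{cal}}\|_2 = \|A\,\text{ALR}(\mathbf{p}^{\text{CNN}}) + \mathbf{b}\|_2 \le M_A L_\epsilon + M_b =: Z,
\]
uniformly over $\mathcal{T}$; appending the pinned reference coordinate, the $c$-vector $\tilde{\mathbf{z}} := (z^{\text{cal}}_1,\dots,z^{\text{cal}}_{c-1},0)$ satisfies $\|\tilde{\mathbf{z}}\|_\infty \le Z$.

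With these in hand, I would lower-bound the softmax outputs: each calibrated probability satisfies $p_j^{\text{cal}} = e^{\tilde z_j}/\sum_{k=1}^{c} e^{\tilde z_k} \ge e^{-Z}/(c\,e^{Z}) = c^{-1}e^{-2Z} =: \epsilon' > 0$. Since $\mathbf{y}$ is one-hot, $\ell(T;\mathbf{p},\mathbf{y}) = -\log p^{\text{cal}}_{y} \in [0, -\log\epsilon']$, and therefore
\[
\sup_{T\in\mathcal{T}} |\ell(T;\mathbf{p},\mathbf{y})| \le -\log\epsilon' = \log c + 2\bigl(M_A\sqrt{c-1}\,\log(1/\epsilon) + M_b\bigr) =: M,
\]
which is finite and depends only on $\epsilon, M_A, M_b, c$, as claimed.

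There is no real obstacle here; the statement is a routine propagation of bounds whose difficulty was already discharged by choosing $\mathcal{T}$ compact and $A$ well-conditioned (Condition~(i)). The two points that need a little care are: (i) the appended zero coordinate must be retained when inverting the ALR map---it contributes $e^{0}=1$ to both numerator and denominator of the softmax, and dropping it would invalidate the lower bound; and (ii) the norm bookkeeping $\|A\|_{\mathrm{op}} \le \|A\|_F$, since the regularizer and the definition of $\mathcal{T}$ control the Frobenius norm whereas the logit estimate naturally uses the operator norm.
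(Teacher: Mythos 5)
Your argument is correct and follows the same four-step chain as the paper's own proof (bound ALR coordinates, propagate through the affine map, lower-bound the softmax, then bound the log-loss), differing only in using the slightly looser but simpler ALR estimate $|\log(p_j/p_c)| \le \log(1/\epsilon)$ in place of the paper's $\log\bigl((1-(c-1)\epsilon)/\epsilon\bigr)$, which is immaterial since both depend only on $\epsilon$ and $c$. Your explicit note about retaining the pinned zero coordinate in the softmax denominator is a nice clarification but does not change the substance of the argument.
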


\begin{proof}
We establish the bound through a chain of four inequalities.

\textit{Step (a): Bounded ALR inputs.} Since $p_j^{\text{CNN}} \geq \epsilon$ and $\sum_j p_j = 1$:
\begin{equation}
p_j^{\text{CNN}} \leq 1 - \sum_{k \neq j} p_k^{\text{CNN}} \leq 1 - (c-1)\epsilon
\end{equation}

The ALR coordinates satisfy:
\begin{equation}
|z_k| = \left|\log\frac{p_k}{p_c}\right| \leq \max\left\{\log\frac{1-(c-1)\epsilon}{\epsilon}, \log\frac{\epsilon}{1-(c-1)\epsilon}\right\} =: B_z < \infty
\end{equation}

Therefore $\|\mathbf{z}\|_2 \leq B_z\sqrt{c-1}$.

\textit{Step (b): Bounded calibrated logits.} For any $T = (A, \mathbf{b}) \in \mathcal{T}$:
\begin{equation}
\|\mathbf{z}^{\text{cal}}\|_2 = \|A\mathbf{z} + \mathbf{b}\|_2 \leq \|A\|_{op}\|\mathbf{z}\|_2 + \|\mathbf{b}\|_2 \leq M_A \cdot B_z\sqrt{c-1} + M_b =: B_{\text{cal}}
\end{equation}

where we used $\|A\|_{op} \leq \|A\|_F \leq M_A$. Thus $\|\mathbf{z}^{\text{cal}}\|_\infty \leq B_{\text{cal}}$.

\textit{Step (c): Lower bound on calibrated probabilities.} For softmax with bounded inputs:
\begin{equation}
p_j^{\text{cal}} = \frac{e^{z_j^{\text{cal}}}}{\sum_{k=1}^c e^{z_k^{\text{cal}}}} \geq \frac{e^{-B_{\text{cal}}}}{c \cdot e^{B_{\text{cal}}}} = \frac{1}{c}e^{-2B_{\text{cal}}} =: \epsilon' > 0
\end{equation}

\textit{Step (d): Bounded log-loss.} Since $\mathbf{y}$ is one-hot encoded ($\sum_j y_j = 1$, exactly one $y_j = 1$):
\begin{equation}
|\ell(T; \mathbf{p}, \mathbf{y})| = \left|-\sum_j y_j \log p_j^{\text{cal}}\right| = -\log p_{\hat{y}}^{\text{cal}} \leq -\log(\epsilon') = \log(c) + 2B_{\text{cal}} =: M
\end{equation} where $\hat{y}$ is the true class. Critically, $M$ depends only on constants $(\epsilon, M_A, M_b, c)$, not on the random data.
\end{proof}

\paragraph{Condition (iv): Well-Separation (Identifiability).}

For the flipped objective $M^* = -\mathcal{L}^*$, van der Vaart's condition requires:
\begin{equation}
\sup_{T: \|T - T^*\| \geq \varepsilon} M^*(T) < M^*(T^*)
\end{equation}

Equivalently (multiplying by $-1$ and reversing inequality):
\begin{equation}
\inf_{T: \|T - T^*\| \geq \varepsilon} \mathcal{L}^*(T) > \mathcal{L}^*(T^*)
\end{equation}

This follows from strong convexity.

\begin{claim}
$\mathcal{L}^*$ is strongly convex with modulus $\mu = \min\{2\lambda_1, 2\lambda_2\} > 0$.
\end{claim}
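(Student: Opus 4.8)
The plan is to split $\mathcal{L}^*(T) = \E[\ell(T;\mathbf{p},\mathbf{y})] + \rho(T)$, where $\rho(T) := \lambda_1\|A - I\|_F^2 + \lambda_2\|\mathbf{b}\|^2$, and to show that the first summand is convex while the second is $\mu$-strongly convex with $\mu = \min\{2\lambda_1,2\lambda_2\}$. Strong convexity of the total then follows from the elementary fact that a convex function plus a $\mu$-strongly convex function is $\mu$-strongly convex.

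First I would verify convexity of $T \mapsto \E[\ell(T;\mathbf{p},\mathbf{y})]$. Fix $(\mathbf{p},\mathbf{y})$, let $\mathbf{z} = \ALR(\mathbf{p})$, and let $\hat{y}$ index the nonzero entry of $\mathbf{y}$. With the convention $z^{\mathrm{cal}}_c = 0$ and $\mathbf{z}^{\mathrm{cal}} = A\mathbf{z}+\mathbf{b}$, the per-sample loss is $\ell(T;\mathbf{p},\mathbf{y}) = -z^{\mathrm{cal}}_{\hat{y}} + \log\sum_{k=1}^{c} e^{z^{\mathrm{cal}}_k}$. The log-sum-exp map is convex on $\R^c$, the term $-z^{\mathrm{cal}}_{\hat{y}}$ is linear, and $T=(A,\mathbf{b}) \mapsto A\mathbf{z}+\mathbf{b}$ is affine; hence $T \mapsto \ell(T;\mathbf{p},\mathbf{y})$ is a convex function composed with an affine map, and is therefore convex on the whole parameter space. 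The expectation is finite by Lemma~\ref{lem:bounded-loss} and preserves convexity.

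Next I would show $\rho$ is $\mu$-strongly convex. Identifying $T$ with the vector $(\operatorname{vec}(A),\mathbf{b}) \in \R^{(c-1)^2+(c-1)}$ and writing $\|T\|^2 = \|A\|_F^2 + \|\mathbf{b}\|^2$, the penalty $\rho$ is a separable quadratic with constant Hessian $\operatorname{diag}\!\big(2\lambda_1 I_{(c-1)^2},\, 2\lambda_2 I_{c-1}\big) \succeq \mu I$; equivalently $\rho(T) - \tfrac{\mu}{2}\|T\|^2$ is convex. Adding the convex data term, $\mathcal{L}^*(T) - \tfrac{\mu}{2}\|T\|^2$ is convex, i.e.\ $\mathcal{L}^*$ is $\mu$-strongly convex, with $\mu>0$ since $\lambda_1,\lambda_2>0$. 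Restricting to the convex set $\mathcal{T}$ (the intersection of a Frobenius ball, the shifted cone $\{A \succeq \delta I\}$, and the hyperplane $\{\tr A = c-1\}$) preserves strong convexity with the same modulus, which is exactly what Condition (iv) needs: a $\mu$-strongly convex function on a convex domain has a unique minimizer $T^*$ and satisfies $\mathcal{L}^*(T) \ge \mathcal{L}^*(T^*) + \tfrac{\mu}{2}\|T - T^*\|^2$, so $\inf_{\|T-T^*\|\ge\varepsilon}\mathcal{L}^*(T) \ge \mathcal{L}^*(T^*) + \tfrac{\mu}{2}\varepsilon^2 > \mathcal{L}^*(T^*)$.

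There is no essential obstacle here; the only point requiring minor care is the convexity of the cross-entropy/softmax composition, which is immediate once it is rewritten in log-sum-exp form, together with the bookkeeping that the data term contributes only positive semidefinite curvature—its Hessian has the form $J_T^\top \Sigma J_T$ with $J_T$ mapping into a $(c-1)$-dimensional space, hence rank-deficient for $c\ge 3$—so that the strong-convexity modulus must be carried entirely by the penalty, consistent with the Remark on the role of regularization. The extra (nonuniform) curvature from the softmax Hessian (Lemma~\ref{lem:softmax-hessian}) can only raise the local modulus, never lower it, so $\mu = \min\{2\lambda_1,2\lambda_2\}$ is a valid global lower bound.
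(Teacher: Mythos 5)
Your proof is correct, reaches the same modulus $\mu = \min\{2\lambda_1, 2\lambda_2\}$, and correctly attributes all uniform curvature to the quadratic penalty while the data term contributes only (nonuniform) nonnegative curvature. The route differs from the paper's in a small but genuine way. The paper computes the Hessian of the data term explicitly via the chain rule, $\nabla^2_T \ell = J_T^\top \Sigma_{\mathbf{p}^{\text{cal}}} J_T$ with $\Sigma_{\mathbf{p}} = \operatorname{diag}(\mathbf{p}) - \mathbf{p}\mathbf{p}^\top$, invokes Lemma~\ref{lem:softmax-hessian} to show $\Sigma_{\mathbf{p}^{\text{cal}}} \succeq \epsilon^2 I$, and then discards most of that information to conclude $\mathbb{E}[J_T^\top \Sigma J_T] \succeq 0$ before the rank-deficiency observation. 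You instead rewrite the per-sample cross-entropy in log-sum-exp form, $\ell = -z^{\mathrm{cal}}_{\hat{y}} + \log\sum_k e^{z^{\mathrm{cal}}_k}$ with the convention $z^{\mathrm{cal}}_c = 0$, and obtain convexity in $T$ directly from log-sum-exp being convex and $T \mapsto A\mathbf{z} + \mathbf{b}$ affine, then add the strongly convex separable quadratic $\rho$. Your version is more economical and self-contained for the claim as stated: it avoids computing any Hessian of the data term and does not need Lemma~\ref{lem:softmax-hessian} at all, needing only the elementary fact that a convex function plus a $\mu$-strongly convex function is $\mu$-strongly convex. The paper's explicit $J_T^\top \Sigma J_T$ decomposition earns its keep elsewhere --- it supports the surrounding Remark on why the data term cannot supply uniform curvature (rank deficiency of $J_T$), an observation you independently reproduce in your final paragraph --- but for proving this specific claim the two arguments are equally valid and yours is arguably the cleaner derivation.
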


\begin{proof}
The Hessian decomposes as:
\begin{equation}
\nabla^2 \mathcal{L}^*(T) = \mathbb{E}[\nabla^2_T \ell(T; \mathbf{p}, \mathbf{y})] + 2\lambda_1 I_{(c-1)^2} + 2\lambda_2 I_{c-1}
\end{equation}

To connect Lemma~\ref{lem:softmax-hessian} to the parameter Hessian, we use the chain rule. Let $J_T = \partial \mathbf{z}^{\text{cal}} / \partial T$ denote the Jacobian of the map $T \mapsto A \cdot \text{ALR}(\mathbf{p}) + \mathbf{b}$. The Hessian of the per-sample loss decomposes as:
\begin{equation}
\nabla^2_T \ell(T; \mathbf{p}, y) = J_T^\top \Sigma_{\mathbf{p}^{\text{cal}}} J_T
\end{equation}
where $\Sigma_{\mathbf{p}} = \text{diag}(\mathbf{p}) - \mathbf{p}\mathbf{p}^\top$ is the softmax Hessian. By Lemma~\ref{lem:softmax-hessian}, $\Sigma_{\mathbf{p}^{\text{cal}}} \succeq \epsilon^2 I_{c-1}$, which implies:
\begin{equation}
\mathbb{E}[\nabla^2_T \ell] = \mathbb{E}[J_T^\top \Sigma_{\mathbf{p}^{\text{cal}}} J_T] \succeq 0
\end{equation}

The data term is positive semi-definite but not uniformly positive definite, since $J_T \in \mathbb{R}^{(c-1) \times [(c-1)^2 + (c-1)]}$ has rank at most $c-1$, leaving a non-trivial nullspace. The regularization terms provide the requisite uniform curvature:
\begin{equation}
\nabla^2 \mathcal{L}^*(T) \succeq \mu I, \quad \mu = \min\{2\lambda_1, 2\lambda_2\} > 0
\end{equation}

Strong convexity implies: for any $T \neq T^*$,
\begin{equation}
\mathcal{L}^*(T) \geq \mathcal{L}^*(T^*) + \nabla\mathcal{L}^*(T^*)^\top(T - T^*) + \frac{\mu}{2}\|T - T^*\|^2
\end{equation}

At the minimizer, $\nabla\mathcal{L}^*(T^*) = 0$, so:
\begin{equation}
\mathcal{L}^*(T) \geq \mathcal{L}^*(T^*) + \frac{\mu}{2}\|T - T^*\|^2
\end{equation}

For all $T$ with $\|T - T^*\| \geq \varepsilon$:
\begin{equation}
\inf_{\|T - T^*\| \geq \varepsilon} \mathcal{L}^*(T) \geq \mathcal{L}^*(T^*) + \frac{\mu\varepsilon^2}{2} > \mathcal{L}^*(T^*)
\end{equation}
\end{proof}

Strong convexity also ensures $T^*$ is the unique minimizer. Existence follows from compactness of $\mathcal{T}$ and continuity of $\mathcal{L}^*$ (Weierstrass extreme value theorem).

\begin{conclusion}
All four conditions of van der Vaart Theorem 5.7 are satisfied. Therefore:
\begin{equation}
\boxed{\hat{T}_n \xrightarrow{p} T^*}
\end{equation}
\end{conclusion}

\subsubsection*{Part II: Rate}

\textbf{Van der Vaart, Theorem 5.23:} Let $\hat{\theta}_n$ maximize $M_n(\theta)$. If in addition to Theorem 5.7:
\begin{enumerate}[label=(\roman*)]
\item $\theta \mapsto M_n(\theta)$ is twice continuously differentiable,
\item $\theta_0 \in \text{interior}(\Theta)$,
\item The map $\theta \mapsto PM_\theta$ admits a second-order Taylor expansion at $\theta_0$ with non-singular Hessian $V_{\theta_0}$,
\item $\mathbb{P}_n M_{\hat{\theta}_n} \geq \sup_\theta \mathbb{P}_n M_\theta - o_P(n^{-1})$,
\item Stochastic equicontinuity holds,
\end{enumerate}
then:
\begin{equation}
\sqrt{n}(\hat{\theta}_n - \theta_0) \xrightarrow{d} N(0, V_{\theta_0}^{-1} \text{Var}(\dot{M}_{\theta_0}) V_{\theta_0}^{-1})
\end{equation}

\paragraph{Condition (i): Twice Continuous Differentiability.}

The loss $\ell(T; \mathbf{p}, \mathbf{y}) = -\sum_j y_j \log p_j^{\text{cal}}(T)$ is $C^\infty$ in $T$ since softmax is infinitely differentiable and $p_j^{\text{cal}} \geq \epsilon' > 0$ ensures logarithm is smooth. Therefore $\mathcal{L}_n$ and $\mathcal{L}^*$ are twice continuously differentiable. $\clubsuit$

\paragraph{Condition (ii): Interior Solution.}

For sufficiently large $M_A$ and $M_b$ (chosen based on the data-generating process), the minimizer $T^*$ satisfies $\|A^*\|_F < M_A$ and $\|\mathbf{b}^*\| < M_b$ strictly. The regularization terms penalize large parameters, ensuring the solution lies strictly in the interior. $\clubsuit$

\paragraph{Condition (iii): Non-Singular Hessian.}

We have $H := \nabla^2 \mathcal{L}^*(T^*) \succeq \mu I$ with $\mu > 0$ from strong convexity (Part I). $\clubsuit$

\paragraph{Condition (iv): Optimality.}

Our $\hat{T}_n$ is the exact minimizer: $\mathcal{L}_n(\hat{T}_n) = \min_T \mathcal{L}_n(T)$. Flipping signs: $M_n(\hat{T}_n) = \max_T M_n(T)$, which satisfies the condition with $o_P(n^{-1}) = 0$. $\clubsuit$

\paragraph{Condition (v): Stochastic Equicontinuity.}

The gradient satisfies $\nabla_T \ell(T; \mathbf{p}, \mathbf{y}) = J_T^\top(\mathbf{p}^{\text{cal}} - \mathbf{y})$ where $J_T$ is the Jacobian. On compact $\mathcal{T}$, both $J_T$ and $\mathbf{p}^{\text{cal}} - \mathbf{y}$ are bounded, giving $\|\nabla_T \ell\|_2 \leq C < \infty$. This implies finite variance: $\text{Var}(\nabla_T \ell(T^*; \mathbf{p}, \mathbf{y})) \leq C^2 < \infty$. $\clubsuit$

\begin{lemma}[Softmax Hessian Eigenvalue]
\label{lem:softmax-hessian}
For $\mathbf{p} \in \Delta^{c-1}$ with $p_j \geq \epsilon$ for all $j$, the softmax Hessian satisfies:
\begin{equation}
\lambda_{\min}(\Sigma_{\mathbf{p}}) \geq \epsilon^2, \quad \Sigma_{\mathbf{p}} = \text{diag}(\mathbf{p}) - \mathbf{p}\mathbf{p}^\top
\end{equation}
\end{lemma}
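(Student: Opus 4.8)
The plan is to bound $\lambda_{\min}(\Sigma_{\mathbf p})$ from below directly through the Rayleigh quotient, using a single application of the Cauchy--Schwarz inequality. Throughout, $\Sigma_{\mathbf p}=\operatorname{diag}(\mathbf p)-\mathbf p\mathbf p^{\top}$ denotes the $(c-1)\times(c-1)$ matrix formed from the first $c-1$ coordinates of $\mathbf p$ (this is the matrix appearing in the chain-rule identity $\nabla_T^2\ell(T;\mathbf p,y)=J_T^{\top}\Sigma_{\mathbf{p}^{\text{cal}}}J_T$ used in the proof of Theorem~\ref{thm:consistency}); the key structural fact we exploit is that, since $\mathbf p\in\Delta^{c-1}$ has every coordinate at least $\epsilon$, these $c-1$ entries sum to $1-p_c\le 1-\epsilon$ while $p_c\ge\epsilon$.

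First I would fix an arbitrary unit vector $\mathbf v\in\mathbb R^{c-1}$ and expand the quadratic form as $\mathbf v^{\top}\Sigma_{\mathbf p}\mathbf v=\sum_{i=1}^{c-1}p_i v_i^{2}-\bigl(\sum_{i=1}^{c-1}p_i v_i\bigr)^{2}$ (one may recognize this as a variance, but the algebraic identity is all that is needed). Next I would control the subtracted term by Cauchy--Schwarz in the form $\bigl(\sum_{i=1}^{c-1}p_i v_i\bigr)^{2}\le\bigl(\sum_{i=1}^{c-1}p_i\bigr)\bigl(\sum_{i=1}^{c-1}p_i v_i^{2}\bigr)=(1-p_c)\sum_{i=1}^{c-1}p_i v_i^{2}$, which gives $\mathbf v^{\top}\Sigma_{\mathbf p}\mathbf v\ge p_c\sum_{i=1}^{c-1}p_i v_i^{2}$. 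Finally I would use $p_i\ge\epsilon$ and $\|\mathbf v\|_2=1$ to get $\sum_{i=1}^{c-1}p_i v_i^{2}\ge\epsilon\|\mathbf v\|_2^{2}=\epsilon$, and combine with $p_c\ge\epsilon$ to conclude $\mathbf v^{\top}\Sigma_{\mathbf p}\mathbf v\ge\epsilon^{2}$. Taking the infimum over unit $\mathbf v$ yields $\lambda_{\min}(\Sigma_{\mathbf p})\ge\epsilon^{2}$, equivalently $\Sigma_{\mathbf p}\succeq\epsilon^{2}I_{c-1}$.

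The computation is short, so the only real obstacle is choosing the right estimate rather than a lossy one. A naive Weyl perturbation bound $\lambda_{\min}(\operatorname{diag}(\mathbf p)-\mathbf p\mathbf p^{\top})\ge\min_i p_i-\lambda_{\max}(\mathbf p\mathbf p^{\top})=\min_i p_i-\|\mathbf p\|_2^{2}$ is useless here: when one coordinate of $\mathbf p$ is close to its maximal allowed value, $\|\mathbf p\|_2^{2}$ approaches $1$ and the bound goes negative. The fix is to keep $\sum p_i v_i^{2}$ and $(\sum p_i v_i)^{2}$ together and peel off the factor $p_c$ via Cauchy--Schwarz \emph{before} invoking $p_i\ge\epsilon$. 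One should also verify the direction of Cauchy--Schwarz (it must \emph{upper}-bound the term being subtracted) and that the normalization used is $\|\mathbf v\|_2=1$, not $\sum_i v_i=1$.
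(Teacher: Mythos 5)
Your proposal is correct and takes exactly the same route as the paper: lower-bound the Rayleigh quotient of the reduced $(c-1)\times(c-1)$ matrix, peel off the factor $p_c$ via the Cauchy--Schwarz estimate $\bigl(\sum_i p_i v_i\bigr)^2 \le \bigl(\sum_i p_i\bigr)\bigl(\sum_i p_i v_i^2\bigr) = (1-p_c)\sum_i p_i v_i^2$, and then apply $p_i\ge\epsilon$, $p_c\ge\epsilon$, and $\|\mathbf v\|_2=1$ to conclude $\mathbf v^{\top}\Sigma_{\mathbf p}\mathbf v\ge\epsilon^2$. Your observations about why a Weyl-type bound fails and about carefully identifying $\Sigma_{\mathbf p}$ as the $(c-1)\times(c-1)$ reduced matrix (the full $c\times c$ version is singular) are good supplementary remarks, but the core argument matches the paper's verbatim.
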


\begin{proof}
Work with the $(c-1) \times (c-1)$ matrix $H = \text{diag}(p_1, \ldots, p_{c-1}) - \mathbf{p}_{1:c-1}\mathbf{p}_{1:c-1}^\top$. For any unit vector $\mathbf{v} \in \mathbb{R}^{c-1}$ with $\|\mathbf{v}\|_2 = 1$:
\begin{align}
\mathbf{v}^\top H \mathbf{v} &= \sum_{i=1}^{c-1} p_i v_i^2 - \left(\sum_{i=1}^{c-1} p_i v_i\right)^2 \\
&\geq \sum_{i=1}^{c-1} p_i v_i^2 - (1-p_c)\sum_{i=1}^{c-1} p_i v_i^2 \quad \text{(Cauchy-Schwarz: $(\sum a_i b_i)^2 \leq (\sum a_i)(\sum a_i b_i^2)$)} \\
&= p_c \sum_{i=1}^{c-1} p_i v_i^2 \\
&\geq p_c \cdot \min_{1 \leq i \leq c-1} p_i \cdot \underbrace{\sum_{i=1}^{c-1} v_i^2}_{= \|\mathbf{v}\|_2^2 = 1} \\
&\geq \epsilon \cdot \epsilon = \epsilon^2
\end{align}
\end{proof}

\begin{conclusion}
By van der Vaart Theorem 5.23:
\begin{equation}
\sqrt{n}(\hat{T}_n - T^*) \xrightarrow{d} N(0, H^{-1}VH^{-1})
\end{equation}

where $H = \nabla^2\mathcal{L}^*(T^*)$ and $V = \text{Var}(\nabla\ell(T^*; \mathbf{p}, \mathbf{y}))$.

This implies:
\begin{equation}
\boxed{\|\hat{T}_n - T^*\| = O_p(n^{-1/2})}
\end{equation}
\end{conclusion}

\subsection{Proof of Theorem~\ref{thm:concentration}}
\label{app:proof-concentration}

We prove concentration of the reliability score $R(\mathbf{p}) = \exp(-\lambda \cdot \dFR(\mathbf{p}, \mathbf{e}_{\hat{j}}))$ where $\hat{j} = \argmax_k p_k$.

\begin{remark}[Proof Strategy]
The reliability function involves an $\argmax$ operation, which could introduce discontinuities at decision boundaries where $p_i = p_j$ for some $i \neq j$. We first prove $R$ is continuous everywhere (Steps 1--3), then use this continuity with the bounded range property to apply Hoeffding's inequality (Steps 4--6).
\end{remark}

\subsubsection*{Step 1: Decision Region Decomposition}

The simplex $\Delta^{c-1}$ partitions into $c$ decision regions:
\begin{equation}
\Delta_j = \{\mathbf{p} \in \Delta^{c-1} : p_j = \max_k p_k\}, \quad j = 1, \ldots, c
\end{equation}

Within each $\Delta_j$, the predicted class is fixed at $\hat{j} = j$, so the map $R_j(\mathbf{p}) = \exp(-\lambda \cdot \dFR(\mathbf{p}, \mathbf{e}_j))$ is smooth (composition of smooth functions: Fisher-Rao distance and exponential).

The decision boundaries $\mathcal{B} = \{\mathbf{p} : p_i = p_j \text{ for some } i \neq j\}$ form a set of measure zero in $\Delta^{c-1}$ (union of lower-dimensional submanifolds).

\subsubsection*{Step 2: Lipschitz Property Within Regions}

We establish Lipschitz continuity within each decision region, which will be used to prove global continuity in Step 3.

\begin{claim}
$R_j$ is $\lambda$-Lipschitz on $\Delta_j$.
\end{claim}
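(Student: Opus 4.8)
The plan is to exhibit $R_j$ as a composition of two maps, each Lipschitz with an explicit constant, and then multiply the constants. Write $R_j = \phi \circ d_j$, where $d_j(\mathbf{p}) = \dFR(\mathbf{p}, \mathbf{e}_j)$ is the Fisher--Rao distance to the $j$-th vertex and $\phi(t) = e^{-\lambda t}$. The natural metric on the domain here is the Fisher--Rao distance $\dFR$ itself, not the Euclidean distance inherited from $\R^c$, since the downstream argument in Theorem~\ref{thm:concentration} only uses the bounded geodesic diameter $\diam(\simplex) = \pi$; I will therefore prove $|R_j(\mathbf{p}) - R_j(\mathbf{q})| \le \lambda\, \dFR(\mathbf{p}, \mathbf{q})$ for all $\mathbf{p}, \mathbf{q} \in \Delta_j$. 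Note also that on $\Delta_j$ the predicted class is constant ($\hat{\jmath} = j$), so $R$ and $R_j$ agree there and no $\argmax$ switching enters.

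First I would handle the outer map: on $[0,\infty)$ the function $\phi(t) = e^{-\lambda t}$ is differentiable with $|\phi'(t)| = \lambda e^{-\lambda t} \le \lambda$, since $\lambda > 0$ and $t \ge 0$, so by the mean value theorem $\phi$ is $\lambda$-Lipschitz on $[0,\infty)$, and since Fisher--Rao distances are nonnegative this is the only range on which $\phi$ is ever evaluated. Second I would handle the inner map: for any metric space the distance-to-a-fixed-point function is $1$-Lipschitz, i.e.\ the reverse triangle inequality gives $|\dFR(\mathbf{p}, \mathbf{e}_j) - \dFR(\mathbf{q}, \mathbf{e}_j)| \le \dFR(\mathbf{p}, \mathbf{q})$; this needs nothing about $\Delta_j$ (convexity, geodesic completeness, etc.) because it is just the triangle inequality for $(\simplex, \dFR)$ applied to three of its points. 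Composing, $|R_j(\mathbf{p}) - R_j(\mathbf{q})| = |\phi(d_j(\mathbf{p})) - \phi(d_j(\mathbf{q}))| \le \lambda\,|d_j(\mathbf{p}) - d_j(\mathbf{q})| \le \lambda\,\dFR(\mathbf{p}, \mathbf{q})$, which is the claim.

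There is essentially no hard analytic obstacle; the only point warranting care is the choice of metric. The map $\mathbf{p} \mapsto \dFR(\mathbf{p}, \mathbf{e}_j)$ is \emph{not} Lipschitz with respect to the Euclidean metric uniformly up to the boundary, since the square-root embedding $\mathbf{p} \mapsto 2(\sqrt{p_1}, \dots, \sqrt{p_c})$ has unbounded Jacobian as some $p_i \to 0$, so the statement must be (and for the Hoeffding step only needs to be) Lipschitz in $\dFR$. If a Euclidean-metric version were wanted on the interior region $\{p_{ij} \ge \epsilon\}$ of Theorem~\ref{thm:consistency}, I would precompose with the elementary comparison $\dFR(\mathbf{p},\mathbf{q}) \le \|\mathbf{p}-\mathbf{q}\|_2/(2\sqrt{\epsilon})$, obtained by integrating $\sqrt{\sum_i \dot\gamma_i^2/(4\gamma_i)}$ along the straight segment $\gamma$ from $\mathbf{p}$ to $\mathbf{q}$ (which stays in $\{p_i \ge \epsilon\}$ by convexity) and bounding $1/(4\gamma_i) \le 1/(4\epsilon)$, yielding Lipschitz constant $\lambda/(2\sqrt{\epsilon})$; but this refinement is not needed for the theorem as stated.
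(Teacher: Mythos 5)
Your proof is correct and follows the same route as the paper's: the reverse triangle inequality makes $\mathbf{p} \mapsto \dFR(\mathbf{p}, \mathbf{e}_j)$ $1$-Lipschitz, and the mean value theorem with $|\phi'(t)| = \lambda e^{-\lambda t} \le \lambda$ on $t \ge 0$ makes the outer exponential $\lambda$-Lipschitz, so composing gives the bound $|R_j(\mathbf{p}) - R_j(\mathbf{q})| \le \lambda\,\dFR(\mathbf{p},\mathbf{q})$. Your added remark that the Lipschitz constant must be stated with respect to $\dFR$ rather than the Euclidean metric (since the square-root chart blows up at the boundary) is a useful clarification the paper leaves implicit, and correctly identifies that only the $\dFR$-version is needed for the Hoeffding step.
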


\begin{proof}
The Fisher-Rao distance satisfies the triangle inequality. For any $\mathbf{p}, \mathbf{q} \in \Delta^{c-1}$:
\begin{equation}
|\dFR(\mathbf{p}, \mathbf{e}_j) - \dFR(\mathbf{q}, \mathbf{e}_j)| \leq \dFR(\mathbf{p}, \mathbf{q})
\end{equation}

For the exponential function $f(d) = e^{-\lambda d}$, we have $|f'(d)| = \lambda e^{-\lambda d} \leq \lambda$ for all $d \geq 0$. By the mean value theorem and chain rule, for $\mathbf{p}, \mathbf{q} \in \Delta_j$:
\begin{equation}
|R_j(\mathbf{p}) - R_j(\mathbf{q})| = |f(\dFR(\mathbf{p}, \mathbf{e}_j)) - f(\dFR(\mathbf{q}, \mathbf{e}_j))| \leq \lambda \cdot |\dFR(\mathbf{p}, \mathbf{e}_j) - \dFR(\mathbf{q}, \mathbf{e}_j)|
\end{equation}

Applying the triangle inequality:
\begin{equation}
|R_j(\mathbf{p}) - R_j(\mathbf{q})| \leq \lambda \cdot \dFR(\mathbf{p}, \mathbf{q})
\end{equation}
\end{proof}

\subsubsection*{Step 3: Continuity Across Boundaries}

\begin{remark}[Key Insight]
Although the $\argmax$ operation switches between regions at boundaries, the function values match exactly at these boundaries.
\end{remark}

Consider a boundary point between $\Delta_i$ and $\Delta_j$ where $p_i = p_j$ (both are maximal). The Fisher-Rao distance to a vertex $\mathbf{e}_k$ is:
\begin{equation}
\dFR(\mathbf{p}, \mathbf{e}_k) = 2\arccos\left(\sum_{j=1}^c \sqrt{p_j \cdot \delta_{jk}}\right) = 2\arccos(\sqrt{p_k})
\end{equation}

When $p_i = p_j$ at a boundary:
\begin{equation}
\dFR(\mathbf{p}, \mathbf{e}_i) = 2\arccos(\sqrt{p_i}) = 2\arccos(\sqrt{p_j}) = \dFR(\mathbf{p}, \mathbf{e}_j)
\end{equation}

Therefore:
\begin{equation}
R_i(\mathbf{p}) = \exp(-\lambda \cdot \dFR(\mathbf{p}, \mathbf{e}_i)) = \exp(-\lambda \cdot \dFR(\mathbf{p}, \mathbf{e}_j)) = R_j(\mathbf{p})
\end{equation}

The function values agree exactly at boundaries.

\begin{conclusion}
$R(\mathbf{p})$ is continuous on all of $\Delta^{c-1}$. The $\argmax$ operation introduces no discontinuity because the competing regions have identical function values where they meet. Combined with Lipschitz continuity within each region (Step 2), this establishes global continuity.
\end{conclusion}

\subsubsection*{Step 4: Bounded Range}

\begin{lemma}[Diameter of Probability Simplex]
$\diam(\Delta^{c-1}) = \pi$ under the Fisher-Rao metric.
\end{lemma}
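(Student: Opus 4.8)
The plan is to read off $\diam(\simplex) = \sup_{\mathbf p,\mathbf q\in\simplex}\dFR(\mathbf p,\mathbf q)$ directly from the closed form \eqref{eq:fisher-rao}, proving a matching upper bound and exhibiting an explicit extremal pair. For the upper bound I would note that the Bhattacharyya overlap $\rho(\mathbf p,\mathbf q):=\sum_{i=1}^c\sqrt{p_iq_i}$ is a sum of nonnegative terms, so $\rho(\mathbf p,\mathbf q)\ge 0$ for all $\mathbf p,\mathbf q\in\simplex$, while Cauchy--Schwarz (using $\sum_i p_i=\sum_i q_i=1$) gives $\rho(\mathbf p,\mathbf q)\le 1$; since $\arccos$ is decreasing on $[0,1]$, $\dFR(\mathbf p,\mathbf q)=2\arccos\rho(\mathbf p,\mathbf q)\le 2\arccos 0=\pi$. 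For attainment I would take two distinct vertices $\mathbf e_i,\mathbf e_j$, which exist because $c\ge 2$: their supports are disjoint, so $\rho(\mathbf e_i,\mathbf e_j)=0$ and $\dFR(\mathbf e_i,\mathbf e_j)=\pi$. Hence the supremum is in fact a maximum and equals $\pi$.

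As an independent cross-check --- and to tie the claim to the curvature discussion in the Geometric Foundations remark --- I would also give the spherical argument: the square-root embedding $\mathbf p\mapsto 2(\sqrt{p_1},\dots,\sqrt{p_c})$ is an isometry from $(\simplex,g)$ onto the intersection of the sphere of radius $2$ in $\R^c$ with the closed positive orthant, and under this identification $\dFR$ is the great-circle distance $2\theta$, where $\theta\in[0,\pi/2]$ is the Euclidean angle between the images. Two unit vectors lying in the closed positive orthant subtend an angle of at most $\pi/2$, with equality exactly when they have disjoint supports (e.g.\ distinct coordinate axes, the images of the simplex vertices), so the intrinsic diameter of this spherical cap is $2\cdot(\pi/2)=\pi$ --- half the diameter $2\pi$ of the full sphere of radius $2$.

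The only delicate point, and the one I would flag explicitly, is that the Riemannian metric $g_{ij}=\delta_{ij}/(4p_i)$ degenerates on $\partial\simplex$, so ``diameter under the Fisher--Rao metric'' has to mean the diameter of the associated geodesic (length) metric, which extends continuously from the open interior to the closed simplex and is given by the $2\arccos$ formula everywhere. I would close by observing that the geodesic realizing $\dFR(\mathbf e_i,\mathbf e_j)=\pi$ is the simplex edge joining those vertices --- the great-circle arc in the $2$-plane spanned by coordinates $i$ and $j$ --- which stays inside the closed simplex, so the value $\pi$ is genuinely attained as an infimum of path lengths rather than merely approached in a limit. Beyond this bookkeeping there is no real obstacle: the statement is essentially the fact that the closed simplex sits isometrically inside a round $2$-sphere as its positive-orthant cap, whose diameter is a quarter of that sphere's full great circle.
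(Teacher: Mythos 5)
Your proof is correct and uses the same closed-form calculation as the paper for the extremal pair of vertices, but it is strictly more complete: the paper's proof only exhibits $\dFR(\mathbf e_i,\mathbf e_j)=2\arccos(0)=\pi$ without establishing the matching upper bound, whereas you also supply the easy but necessary observation that $\sum_i\sqrt{p_iq_i}\ge 0$ forces $\dFR\le\pi$ for all pairs. The spherical cross-check and the remark about the metric's boundary degeneracy (so that ``diameter'' means the length-metric diameter on the closed simplex) are both correct and go beyond what the paper records; the former is consistent with the square-root embedding discussion in the paper's Geometric Foundations remark, and the latter is a genuine subtlety the paper glosses over.
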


\begin{proof}
The maximum distance occurs between distinct vertices:
\begin{equation}
\dFR(\mathbf{e}_i, \mathbf{e}_j) = 2\arccos\left(\sum_{k=1}^c \sqrt{\delta_{ik} \cdot \delta_{jk}}\right) = 2\arccos(0) = \pi \quad \text{for } i \neq j
\end{equation}
\end{proof}

Since $\dFR(\mathbf{p}, \mathbf{e}_{\hat{j}}) \in [0, \pi]$, the reliability score has range:
\begin{equation}
R \in [\exp(-\lambda \pi), 1]
\end{equation}

\begin{itemize}[leftmargin=*]
\item \textbf{Maximum:} $R = 1$ when $\mathbf{p} = \mathbf{e}_{\hat{j}}$ (perfect confidence in predicted class)
\item \textbf{Minimum:} $R = \exp(-\lambda \pi)$ when $\mathbf{p} = \mathbf{e}_k$ for $k \neq \hat{j}$ (maximum confusion)
\end{itemize}

The effective range is:
\begin{equation}
\Delta R = 1 - \exp(-\lambda \pi)
\end{equation}

For common values of $\lambda$:

\begin{center}
\begin{tabular}{cccc}
\toprule
$\lambda$ & $\exp(-\lambda \pi)$ & Range $\Delta R$ & Interpretation \\
\midrule
0.5 & 0.208 & 0.792 & Low sensitivity \\
1.0 & 0.043 & 0.957 & Moderate sensitivity \\
2.0 & 0.002 & 0.998 & High sensitivity \\
\bottomrule
\end{tabular}
\end{center}

\subsubsection*{Step 5: Concentration via Hoeffding}

\textbf{Hoeffding's Inequality \cite{hoeffding1963probability}:} Let $X_1, \ldots, X_n$ be independent random variables with $X_i \in [a_i, b_i]$ almost surely. Then for $\bar{X} = \frac{1}{n}\sum_{i=1}^n X_i$:
\begin{equation}
\mathbb{P}(|\bar{X} - \mathbb{E}[\bar{X}]| > t) \leq 2\exp\left(-\frac{2n^2t^2}{\sum_{i=1}^n(b_i - a_i)^2}\right)
\end{equation}

\begin{proof}[Application to our setting]
For i.i.d. observations with $R_i \in [e^{-\lambda\pi}, 1]$ having common range $\Delta R = 1 - e^{-\lambda\pi}$, Hoeffding's inequality gives:
\begin{equation}
\mathbb{P}(|\bar{R}_n - \mathbb{E}[R]| > t) \leq 2\exp\left(-\frac{2n^2t^2}{n(\Delta R)^2}\right) = 2\exp\left(-\frac{2nt^2}{(\Delta R)^2}\right)
\end{equation}

For a single random draw ($n=1$), this yields:
\begin{equation}
\boxed{\mathbb{P}(|R - \mathbb{E}[R]| > t) \leq 2\exp\left(-\frac{2t^2}{(1 - e^{-\lambda\pi})^2}\right)}
\end{equation}

This proves part (i) of Theorem~\ref{thm:concentration}.
\end{proof}

\subsubsection*{Step 6: Sub-Gaussian Parameter}

By Hoeffding's lemma, any random variable $X$ with range $[a, b]$ satisfies:
\begin{equation}
\mathbb{E}[e^{s(X - \mathbb{E}[X])}] \leq \exp\left(\frac{s^2(b-a)^2}{8}\right)
\end{equation}

This is the definition of a sub-Gaussian random variable with parameter $\sigma^2 = (b-a)^2/4$.

For $R \in [e^{-\lambda\pi}, 1]$:
\begin{equation}
\boxed{\sigma^2 = \frac{(1 - e^{-\lambda\pi})^2}{4}}
\end{equation}

This proves part (ii) of Theorem~\ref{thm:concentration}.

\begin{remark}[Numerical values]
\begin{itemize}[leftmargin=*]
\item For $\lambda = 1$: $\sigma^2 = (1 - e^{-\pi})^2/4 \approx 0.916/4 \approx 0.229$, giving $\sigma \approx 0.48$
\item For $\lambda = 2$: $\sigma^2 \approx 0.250$, giving $\sigma \approx 0.50$
\end{itemize}
\end{remark}

This proves part (iii): for $\lambda = 1$, the concentration bound becomes:
\begin{equation}
\mathbb{P}(|R - \mathbb{E}[R]| > t) \leq 2\exp(-2.18 \, t^2)
\end{equation}

\subsubsection*{Comparison of Bounds}

Alternative concentration bounds could be derived using Lipschitz properties. We compare approaches:

\begin{center}
\begin{tabular}{lccl}
\toprule
Approach & Variance Bound & For $\lambda=1$ & Method \\
\midrule
Lipschitz + diameter & $\lambda^2\pi^2 = 9.87$ & $2\exp(-0.20 \, t^2)$ & McDiarmid \\
\textbf{Hoeffding on range} & $(1-e^{-\pi})^2 = 0.916$ & $\mathbf{2\exp(-2.18 \, t^2)}$ & Hoeffding \\
\bottomrule
\end{tabular}
\end{center}

The Hoeffding bound using the explicit range $[e^{-\lambda\pi}, 1]$ is approximately $11\times$ tighter than a naive Lipschitz-diameter bound for typical $\lambda \approx 1$.

\begin{remark}[Why the improvement?]
The Lipschitz approach uses diameter $\pi$ as the worst-case distance, but the reliability function decays exponentially with distance, making the effective range $1 - e^{-\lambda\pi} \ll \lambda\pi$ for $\lambda \geq 1$.
\end{remark}

\begin{conclusion}
We have proven all three parts of Theorem~\ref{thm:concentration}:
\begin{enumerate}[label=(\roman*)]
\item The tail bound via Hoeffding's inequality (Step 5)
\item The sub-Gaussian parameter $\sigma^2 = (1-e^{-\lambda\pi})^2/4$ (Step 6)
\item The explicit constant $2.18$ for $\lambda = 1$ (Step 6)
\end{enumerate}

The key technical insight is that the reliability score $R(\mathbf{p})$ is continuous everywhere on $\Delta^{c-1}$ despite the $\argmax$ operation in its definition (Step 3). This continuity, combined with the bounded range from the simplex diameter (Step 4), enables application of Hoeffding's inequality to obtain a tight concentration bound.
\end{conclusion}

\end{document}